\runningtitle{VI in location-scale families}
\newtheorem{theorem}{Theorem} 
\newtheorem{definition}[theorem]{Definition}
\newtheorem{proposition}[theorem]{Proposition}
\newtheorem{remark}[theorem]{Remark}
\newtheorem{lemma}[theorem]{Lemma}
\newcommand{\KL}{\text{KL}}
\newcommand{\norm}[1]{\left \lVert #1 \right \rVert}
\begin{document}

%

%

\twocolumn[

\aistatstitle{Variational Inference in Location-Scale Families: 
\\ Exact Recovery of the Mean and Correlation Matrix}
\aistatsauthor{ Charles C. Margossian \And Lawrence K. Saul }

\aistatsaddress{ Center for Computational Mathematics \\ Flatiron Institute \And Center for Computational Mathematics \\ Flatiron Institute } ]

\begin{abstract}
  Given an intractable target density $p$, variational inference (VI) attempts to find the best approximation $q$ from a tractable family~$\mathcal Q$.
  This is typically done by minimizing the exclusive Kullback-Leibler divergence, $\KL(q||p)$.
  In practice,~$\mathcal Q$ is not rich enough to contain $p$, and the approximation is misspecified even when it is a unique global minimizer of $\KL(q||p)$.
  In this paper, we analyze the robustness of VI to these misspecifications
  when $p$ exhibits certain symmetries
  and $\mathcal Q$ is a location-scale family that shares these symmetries.
We prove strong guarantees for VI not only under mild regularity conditions but also in the face of severe misspecifications.
Namely, we show that (i) VI recovers the mean of~$p$ 
when~$p$ exhibits an \textit{even} symmetry, and 
(ii) it recovers 
the correlation matrix of $p$ 
when in addition~$p$ 
exhibits an \textit{elliptical} symmetry.
  These guarantees hold for the mean even when $q$ is factorized and $p$ is not, and for the correlation matrix even when~$q$ and~$p$ behave differently in their tails. 
  We analyze various regimes of Bayesian inference where these symmetries are useful idealizations, and we also investigate experimentally how VI behaves in~\mbox{their absence}.  
\end{abstract}

\section{INTRODUCTION}

In many problems, it is necessary to approximate an intractable distribution $p$.
Variational inference (VI) posits a parameterized family $\mathcal Q$ of tractable distributions, and within this family it attempts to find the best approximation to $p$ \citep{Jordan:1999, Wainwright:2008, Blei:2017}. This is typically done by minimizing the exclusive Kullback-Leibler (KL) divergence with respect to the parameters of $\mathcal{Q}$.
An important application of VI lies in Bayesian inference, where the primary computational task is to approximate the posterior distribution.

In practice, $p \notin \mathcal Q$.
For example, we may set $\mathcal Q$ to be a family of factorized distributions---the so-called \textit{mean-field} approximation~\citep{Peterson:1987, Hinton:1993, Parisi:1998}---even when $p$ does not factorize, or to be the family of multivariate Gaussian distributions even when $p$ is non-Gaussian.
Still, while the best approximation in $\mathcal{Q}$ may  not exactly match~$p$, we may hope that it captures certain key features such as its mean and correlation matrix.

Several empirical studies report that VI accurately estimates the mean of $p$; this is true even when the approximation is misspecified to a degree that other quantities, such as the variance and entropy, are poorly estimated
\citep[e.g][]{Mackay:2003, Turner:2011, Blei:2017, Giordano:2018, Margossian:2024}.
These results are nuanced by other examples in which VI poorly estimates expectation values \citep{Huggins:2020, Zhang:2022}. 
There is therefore a need for a theory that explains the empirical successes of VI, but also cautions against its potential failures.

In this paper, we analyze the robustness of VI when~$\mathcal{Q}$ is a location-scale family.
Our first main result is a guarantee for VI's estimate of the mean. In particular,
we show that if the target density~$p$ and each 
approximation $q\!\in\!\mathcal{Q}$ are endowed with a point of symmetry, then a stationary point of $\KL(q||p)$ is found by matching these points of symmetry. Under further conditions, we show that this stationary point is unique and also a global minimizer.
Thus, VI is guaranteed to locate this point, which corresponds to the mean of~$p$, provided we have a well-performing optimizer. These conditions allow for several misspecifications in $\mathcal Q$: for example, $q$ may be factorized while~$p$ is not, or the tails of $q$ may be lighter (or heavier) than those of $p$.
At the same time, the assumption of symmetry is satisfied by many location-scale families 
(e.g., Gaussian, Laplace, student-t). 

Our second main result is a guarantee for VI's estimate of the correlation matrix. We consider the setting
where $p$ exhibits an elliptical symmetry and $\mathcal{Q}$ is a location-scale family whose base density is spherically symmetric. 
Here, under similar assumptions, we show that $\KL(q||p)$ has a unique minimizer that correctly estimates the correlation matrix of $p$.
We note that VI finds this solution even when no $q\!\in\!\mathcal{Q}$ matches the tail behavior of $p$ nor correctly estimates its covariances.

We complement our theoretical results with an empirical study of VI in location-scale families. We experiment with both synthetic targets, in which we control the amount of symmetry in $p$,
and with real-world posteriors from problems in Bayesian inference.
When VI is used for Bayesian inference---to estimate a posterior $p$ over model parameters from data---we observe that this posterior tends to be symmetric in the limits of \textit{both} small and large amounts of data.
Our experiments also probe VI in intermediate regimes where~$p$ is less amenable to symmetric approximations.
Code to reproduce all experimental results and figures in this paper can be found at \url{https://github.com/charlesm93/VI_location_robust}. 

\textbf{Related work.}
There is a rich literature on 
the robustness of VI to misspecifications of 
$\mathcal{Q}$.
%
While we focus on VI's estimates of the mean and correlation matrix, others 
have examined estimates of the marginal likelihood 
\citep{Jordan:1999, Li:2016}, the corresponding maximum likelihood estimator \citep{Wang:2018}, importance weights when~$q$ is used as a proposal distribution \citep{Yao:2018, Vehtari:2024}, and estimators for various measures of uncertainty \citep[e.g.][]{Mackay:2003, Turner:2011, Margossian:2023, Katsevich:2024, Margossian:2024}.
Researchers have also investigated the frequentist properties of variational Bayes estimators:
that is, how well does the first moment of $q$ recover a true value $z^*$ which underlies a data generating process \citep[e.g][]{Alquier:2020, Yang:2020, Zhang:2020}.


Extending this literature, we identify weak conditions under which VI 
\textit{exactly} recovers the mean, including in non-asymptotic regimes, thereby formalizing past empirical observations \citep[e.g][]{Mackay:2003, Blei:2017, Giordano:2018}.
Previous works have used the Wasserstein distance to bound the error of VI's estimates \citep{Huggins:2020,Biswas:2024}.
While these works provide a post-hoc diagnostic, ours provides a complementary theoretical guarantee. 
\citet{Katsevich:2024} also provide theoretical guarantees for the recovery of the mean, in the case where $\mathcal Q$ is the family of Gaussians.
Their results are non-asymptotic, although exact recovery of the mean is only achieved asymptotically.
To our knowledge, our work is the first to provide guarantees for VI with elliptically symmetric distributions and to investigate its ability to recover the correlation matrix.

Many analyses rely on settings where the posterior~$p$ asymptotically approaches a Gaussian, per the Bernstein-von Mises theorem \citep{Vaart:1998}.
Our analysis does not explicitly consider asymptotic limits, nor does it require~$p$ to be Gaussian. Rather, our work leverages the presence in $p$ of even and elliptical symmetries; these symmetries are observed in Gaussian distributions (and in many other distributions besides).
Hence, the techniques in this paper can also be applied to asymptotic analyses of VI that invoke the Bernstein-von Mises theorem.

\section{PRELIMINARIES}

In this section we review the main ideas and assumptions behind VI, as well as how it is used in practice. We then consider particular symmetries that a target density may exhibit and discuss why they are useful idealizations for certain regimes of Bayesian inference.

\subsection{Variational inference in practice}
\label{sec:VI}

The primary computational task in VI is to minimize the exclusive KL divergence
\begin{equation}
    \KL(q||p) = \int \big[\log q(z) - \log p(z)\big] q(z) \text d z,
    \label{eq:KLqp}
\end{equation}
over $q\!\in\!\mathcal Q$, where $\mathcal{Q}$ is a continuously parameterized family of tractable distributions.
Typically, 
it is only possible to evaluate an unnormalized target density~$\tilde{p}$,
and the best approximation in $\mathcal{Q}$ is computed by maximizing the evidence lower bound (ELBO),
\begin{equation}
    \text{ELBO} = \int \big[\log \tilde p(z) - \log q(z)\big] q(z) \text d z.
    \label{eq:ELBO}
\end{equation}
When this integral cannot be evaluated exactly, it can be approximated via Monte Carlo using draws from~$q$.
A similar approach also produces estimates of the gradient of the ELBO with respect to the variational parameters of $\mathcal Q$.
%
Often, it is assumed that the target density $p$ and each $q\!\in\!\mathcal{Q}$ have support over all of $\mathbb R^d$.
This condition for $p$ can be met by transforming constrained latent variables, as in automatic differentiation VI \citep[ADVI;][]{Kucukelbir:2017}; for example, we may apply a logarithmic transformation to a positive~variable.



\subsection{Assumptions for theoretical analysis}

Our goal in this paper is to provide conditions under which VI is guaranteed to correctly estimate certain properties of the target density~$p$. By this, we mean first that the KL divergence in eq.~(\ref{eq:KLqp}) has a unique minimizer in the family $\mathcal{Q}$, and second that this minimizer shares the desired properties of $p$ even when $p\!\notin\!\mathcal{Q}$.
To provide these guarantees, we must make additional assumptions on $\mathcal{Q}$ and $p$. 

\subsubsection{Location-scale families} 

Our analysis begins by supposing that $\mathcal{Q}$ is a location-scale family or a subfamily thereof. 
In the following definition and throughout the paper, we use $S^\frac{1}{2}$ to denote the principal square-root of a positive-definite matrix $S$. 

\begin{definition}
Let $q_0$ be a density over $\mathbb{R}^d$.
A location-scale family $\mathcal{Q}$ is a two-parameter family $\{q_{\nu,S}\}$ of densities over $\mathbb{R}^d$ satisfying
\begin{equation}
q_{\nu,S}(z) = q_0\big(S^{-\frac{1}{2}}(z\!-\!\nu)\big) |S|^{-\frac{1}{2}}
\end{equation}
for all $z,\nu\!\in\!\mathbb{R}^d$ and $d\!\times\!d$ positive-definite matrices $S$. We say that $q_0$ is the base density of the family $\mathcal{Q}$ and that $\nu$ and $S$ are its location and scale parameters.
\end{definition}

\begin{remark}
The square-root of a matrix is not uniquely defined and other choices may be considered.
Our analysis only requires the determinant of the square-root to be positive.
This condition is verified by the principle square-root but also the Cholesky factor of $S$.
\end{remark}

\begin{definition}
A location family $\mathcal{Q}$ is a one-parameter subfamily $\{q_\nu\}$ of location-scale densities that share the same scale parameter.
\end{definition}

Location-scale families are used widely in statistics: examples include
the Gaussian, Laplace, student-t, and Cauchy families.
They are also particularly relevant to black box VI~\citep[e.g][]{Ranganath:2014, Kucukelbir:2017, Cai:2024}, where the variational approximation is often taken to be Gaussian.
Location-scale distributions are simple to manipulate, and they can be sampled using the
\textit{reparameterization trick} \citep{Kingma:2014, Rezende:2014, Titsias:2014}, which produces low variance Monte Carlo estimators of the ELBO and its gradient.
%

\subsubsection{Even and elliptical symmetries} 

Our analysis focuses on the setting where the target density $p$ and the base density $q_0\!\in\!\mathcal{Q}$ exhibit certain symmetries. We define these symmetries next.

\begin{definition} (Even and odd symmetry.)
  We say a function $f: \mathbb R^d \to \mathbb R$ is even-symmetric about a location point $\nu \in \mathbb R^d$ if for all $\zeta\!\in\!\mathbb{R}^d$ it satisfies
  \begin{equation}
    f(\nu + \zeta) = f(\nu\!-\!\zeta). 
  \end{equation}
  Similarly, we say a function $f$ is odd-symmetric about a location point $\nu$ if for all $\zeta\!\in\!\mathbb{R}^d$ it satisfies
  \begin{equation}
    f(\nu + \zeta) = - f(\nu\!-\!\zeta). 
  \end{equation}
\end{definition}

\begin{definition}
  (Spherical symmetry.)
  We say a density $p(\zeta)$ is spherically symmetric if the density can be written as a function $\norm \zeta$; that is, if $\norm {\zeta_1} = \norm{\zeta_2}$, then $p(\zeta_1) = p(\zeta_2)$.
\end{definition}

\begin{definition}
  (Elliptical symmetry.)
  We say a density $p(z)$ is elliptically symmetric about $\nu\!\in\!\mathbb R^d$ if there exists a positive-definite matrix $M\! \in\! \mathbb R^{d \times d}$ such that the density of $\zeta = M^{-\frac{1}{2}} (z\!-\!\nu)$ is spherically symmetric. In this case we call $M$ the scale matrix of $p$.
\end{definition}

\begin{remark}
\label{rem:corr}
    If $p(z)$ is elliptically symmetric with scale matrix $M$, then $\text{Corr}_p[z_i,z_j] = M_{ij}/\sqrt{M_{ii}M_{jj}}$.
\end{remark}

We emphasize, per the previous remark, that the correlation matrix of an elliptically symmetric distribution is determined by its scale matrix, even though its covariance matrix is not.

Assumptions of symmetry form the cornerstone of our analysis. Specifically, we analyze VI in cases where 
(i) $p$ is even-symmetric and $\mathcal{Q}$ is a location family whose base density $q_0$ is also even-symmetric, and (ii) $p$ is elliptically symmetric and $\mathcal{Q}$ is a location-scale family whose base density $q_0$ is spherically symmetric. 

\subsubsection{Regularity conditions on $p$}
\label{sec:regularity}

VI recasts the problem of inference as one of optimization. It is generally difficult to prove guarantees for non-convex problems in optimization, and this difficulty is also present in our setting. To prove the strongest guarantees for VI, our analysis places further assumptions on the target density: specifically, we assume that $\log p$ is concave on all of $\mathbb{R}^d$ and strictly concave on some open set of $\mathbb{R}^d$. We also assume that~$p$ is differentiable, and that $\log p$ and each $q\!\in\!\mathcal{Q}$ satisfy the assumptions of the dominated convergence theorem to permit differentiating under the integral sign. (More concretely, the reader may assume that each $q\!\in\!\mathcal{Q}$ has finite moments of all orders, and that $\|\nabla \log p(z)\|$ can be bounded by some polynomial in $\|z\|$.)

\subsection{Symmetries and Bayesian inference} \label{sec:posterior-symmetry}

We now briefly examine how exact or approximate symmetries of the target density, $p$, arise in various regimes of Bayesian inference.
%
%
In a Bayesian analysis, we seek to compute the posterior distribution $\pi(z|x)$ of a latent random variable $z$ conditioned on some observation (or collection of observations) $x$. This posterior distribution is given by
%
\begin{equation}  \label{eq:Bayes}
  \pi(z|x) \propto \pi(z) \pi(x|z),
\end{equation}
where $\pi(z)$ is the prior distribution, and $\pi(x|z)$, the likelihood function. When VI is used for Bayesian analysis, the target density we approximate is \mbox{$p(z) = \pi(z|x)$}, and 
eq.~\eqref{eq:Bayes} reveals how $p$ may come to exhibit exact or approximate symmetries.

%

  \textbf{Common symmetry.} The posterior $\pi(z|x)$ will inherit any common symmetry of the prior $\pi(z)$ and the likelihood $\pi(x|z)$ --- for example, if the latter are both even or elliptically symmetric \textit{about the same point}. This scenario is unlikely, although it can arise in empirical Bayes \citep{Carlin:2000}.
  If $\pi(x|z)$ and $\pi(z)$ are symmetric about different points, then the posterior $\pi(z|x)$ will not in general exhibit any symmetry. An exception occurs when both $\pi(x|z)$ and $\pi(z)$ are Gaussian, in which case $\pi(z|x)$ is also Gaussian.

  \textbf{Dominating prior.} If the prior $\pi(z)$ is symmetric and ``dominates'' the likelihood, i.e. $\pi(z|x) \approx \pi(z)$, then the posterior inherits its symmetry to some degree. This occurs if the data is sparse and the likelihood uninformative.

  \textbf{Dominating likelihood.} Conversely, suppose the likelihood dominates.
  This can occur if the prior $\pi(z)$ is flat (i.e. independent of $z$) or we have a large collection of observations $x$ 
  such that $\pi(z|x)$ is very strongly peaked.
  In particular, we know per the Berstein-von-Mises theorem that asymptotically, and under certain regularity conditions, the posterior becomes provably Gaussian (and hence even-symmetric) in the limit of a large number of independent observations. We note, however, that the likelihood $\pi(x|z)$ for a \textit{finite} number of observations does not generally exhibit a point of symmetry even when those observations are individually generated by even-symmetric distributions (e.g., Cauchy, Laplace); this breakdown occurs because the product of non-Gaussian distributions with \textit{different} points of even-symmetry does not itself, in general, exhibit a point of even symmetry. 
   Empirically, though, we find that such likelihoods tend to be approximately symmetric provided that the data is non-adversarial; see Appendix~\ref{app:symmetry} for more discussion.
   
\begin{figure}
    \centering
    \includegraphics[width=1\linewidth]{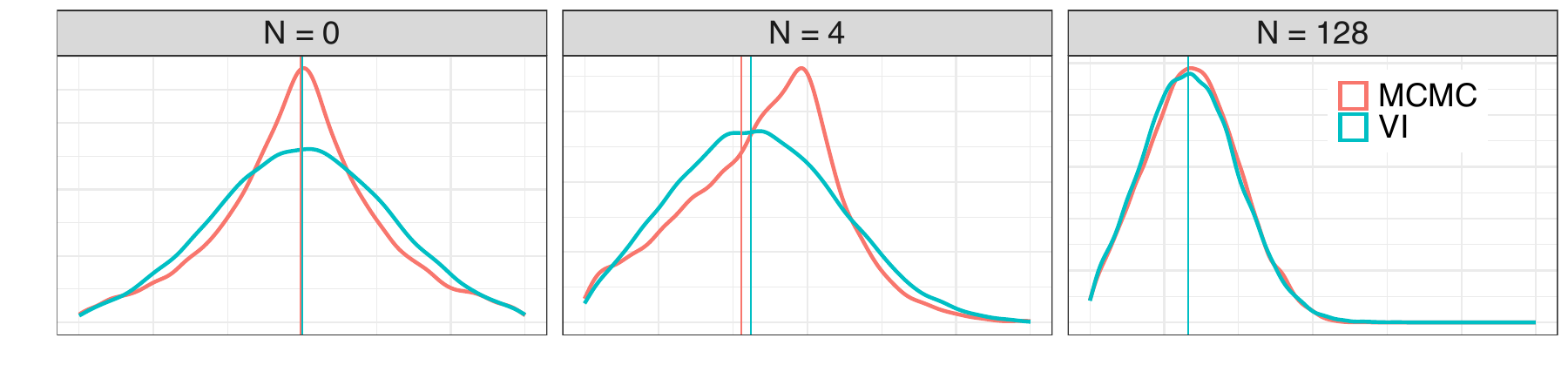}
    \caption{\textit{Posterior distribution of $\beta_1$ for a Bayesian logistic regression
    with~$N$ examples.
    Vertical lines indicate the means estimated by MCMC and VI.
    These estimates match when the posterior is symmetric \mbox{$(N\!=\!0,128)$} and differ when it is not $(N\!=\!4)$.
    }}
    \label{fig:logistic_symmetry}
\end{figure}

Fig.~\ref{fig:logistic_symmetry} illustrates how the symmetry of a posterior depends on the regime (sparse versus rich data) of Bayesian inference, and specifically on the number of observations, $N$, in a Bayesian logistic regression. We consider the model
\begin{eqnarray}
    \beta_0, \beta_1, \beta_2 & \overset{\text{iid}}{\sim} & \text{Laplace}(0, 0.5), \nonumber \\
    y_i & \sim & \text{Bernoulli}(\text{logit}^{-1}(\boldsymbol \beta^T {\bf x}_i)),
\end{eqnarray}
where $\boldsymbol \beta = (\beta_0, \beta_1, \beta_2)$ and ${\bf x}_{i} = (1, x_{i1}, x_{i2})$ is the $i^\text{th}$ component of a design matrix with two covariates.
We compare the posteriors estimated by ADVI and, as a benchmark, long runs of Markov chain Monte Carlo (MCMC) for different numbers of examples.
Both algorithms are implemented in the statistical software \texttt{Stan} \citep{Carpenter:2017}; 
see Appendix~\ref{app:algorithms} for details of these algorithms and Appendix~\ref{app:logistic} for additional experimental results.
Figure~\ref{fig:logistic_symmetry} plots these estimated posterior distributions for $\beta_1$ and $N\!=\!0,4,128$ examples.
We see that the true posterior, as reported by MCMC, is even-symmetric for $N\! =\! 0$ (where the posterior reduces to the prior) and very nearly symmetric for $N\! =\! 128$, but clearly asymmetric for $N\! =\! 4$, a regime where neither the prior nor the likelihood dominates.

Fig.~\ref{fig:logistic_symmetry} illustrates another result: the posterior mean is accurately estimated by VI when the posterior is symmetric ($N\!=\!0$ and $N\!=\!128$), but less accurately estimated when it is not ($N\! =\! 4$). We now formalize this result, providing guarantees for VI when the target density is even-symmetric.



\section{EXACT RECOVERY OF MEAN}

In this section we prove that there are fairly broad conditions under which VI exactly recovers the mean of the target density. We also use simple examples to illustrate the scope of this result and various caveats. In all that follows, we assume that $p$ and the base density $q_0\!\in\!\mathcal{Q}$ have support on all of $\mathbb{R}^d$; we also assume the regularity conditions on $p$ in section~\ref{sec:regularity}.


\subsection{Theoretical Statement} \label{sec:theory-location}

Our first theorem shows how VI is able to capitalize on an even symmetry of the target density.

\begin{theorem}[Exact Recovery of Mean]
\label{thm:location}
Let $\mathcal{Q}$ be a location family whose base distribution $q_0$ is even-symmetric about the origin. If $p$ is even-symmetric about $\mu$, then $\KL(q_\nu||p)$ has a stationary point at \mbox{$\nu\!=\!\mu$}; furthermore, if $\log p$ is concave on $\mathbb{R}^d$ and strictly concave on some open set of $\mathbb{R}^d$, then this stationary point is a unique minimizer of~$\KL(q_\nu||p)$.
\end{theorem}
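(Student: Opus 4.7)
The plan is to reparametrize the KL divergence by a translation so that the location $\nu$ appears only inside $\log p$. Using $q_\nu(z) = q_0(z-\nu)$ and the change of variables $\zeta = z-\nu$, one obtains
\begin{equation}
\KL(q_\nu || p) \;=\; -H[q_0] \;-\; \int \log p(\zeta+\nu)\, q_0(\zeta)\, d\zeta,
\end{equation}
where $H[q_0]$ is the (constant) differential entropy of $q_0$. So minimizing the KL divergence over $\nu$ is equivalent to minimizing $F(\nu) := -\int \log p(\zeta+\nu)\, q_0(\zeta)\, d\zeta$.

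First I would verify that $\nabla F(\mu) = 0$. By the regularity assumptions the gradient passes inside the integral, giving $\nabla F(\nu) = -\int \nabla \log p(\zeta+\nu)\, q_0(\zeta)\, d\zeta$. At $\nu = \mu$, the even-symmetry of $p$ about $\mu$ makes $\zeta \mapsto \log p(\mu+\zeta)$ an even function, so its gradient $\zeta \mapsto \nabla \log p(\mu+\zeta)$ is odd. The integrand is the product of this odd function with the even density $q_0$, hence odd, and the integral over $\mathbb{R}^d$ vanishes. This establishes the stationary point.

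For uniqueness I would upgrade $F$ from being merely convex (immediate: a $q_0$-weighted integral of translates of the convex function $-\log p$) to strictly convex on $\mathbb{R}^d$. Fix $\nu_1 \neq \nu_2$ and $\lambda \in (0,1)$, and for each $\zeta$ set $g_\zeta(t) := \log p(\zeta + (1-t)\nu_1 + t\nu_2)$, a concave function on $[0,1]$. Whenever the segment $\{\zeta + (1-t)\nu_1 + t\nu_2 : t \in [0,1]\}$ meets the open set $U$ on which $\log p$ is strictly concave, $g_\zeta$ is not affine on $[0,1]$; and any one-dimensional concave function that is not affine satisfies the strict chord inequality $g(t) > (1-t)g(0) + t g(1)$ at every interior $t$ (otherwise equality at one interior point would force $g$ to be affine throughout). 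The set of such $\zeta$ is open and nonempty (it contains $U - \nu_1$, for instance), so it carries positive $q_0$-mass because $q_0$ has full topological support on $\mathbb{R}^d$. Integrating against $q_0$ then gives $F((1-\lambda)\nu_1 + \lambda\nu_2) < (1-\lambda) F(\nu_1) + \lambda F(\nu_2)$. Strict convexity together with the stationary point at $\nu = \mu$ identifies $\mu$ as the unique global minimizer.

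The main obstacle I anticipate is precisely this strict-convexity step: one must lift strict concavity of $\log p$ on a possibly small open set $U$ to strict convexity of $F$ on all of $\mathbb{R}^d$. The two ingredients that enable this lift are the full topological support of $q_0$ and the elementary one-dimensional fact about non-affine concave functions cited above. Once strict convexity is secured, the stationary-point calculation is a direct consequence of the interplay between the even symmetry of $\log p$ about $\mu$ and that of $q_0$ about the origin.
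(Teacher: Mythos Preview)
Your proposal is correct and follows essentially the same route as the paper. The stationary-point argument---reparametrize by $\zeta=z-\nu$, differentiate under the integral, and use that the odd gradient of $\log p(\mu+\cdot)$ against the even $q_0$ integrates to zero---is identical. For uniqueness, the paper also lifts strict concavity of $\log p$ on a small open set to strict convexity of $\nu\mapsto\KL(q_\nu\|p)$ by isolating a one-dimensional lemma and a set of $\zeta$ of positive $q_0$-mass where the strict inequality bites; your inline argument via ``non-affine concave functions satisfy the strict chord inequality at every interior point'' is a slightly cleaner variant of the paper's Proposition~\ref{prop:scR1}, which reaches the same conclusion by integrating the derivative.
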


\begin{proof}
We start by exploiting the parameterized form of the location family $\mathcal{Q}$. For $q_\nu\!\in\!\mathcal{Q}$, we can compute
\begin{align}
\KL(q_\nu||p) 
  &= \int\!\! \big[\log q_\nu(z) - \log p(z)\big]\, q_\nu(z)\,\text{d}z, \label{eq:KL-nu}\\
 &= \int\!\! \big[\log q_0(\zeta) - \log p(\nu\!+\!\zeta)\big]\, q_0(\zeta)\,\text{d}\zeta, \label{eq:shift} \\
 &= -\mathcal{H}(q_0)\, - \int\! \log p(\nu\!+\!\zeta)\, q_0(\zeta)\,\text{d}\zeta \label{eq:KL-convex-nu}
\end{align}
where in eq.~(\ref{eq:shift}) we have shifted the variable of integration (to $\zeta\!=\!z\!-\!\nu$), and in eq.~(\ref{eq:KL-convex-nu}), we have used $\mathcal{H}(q_0)$ to denote the entropy of~$q_0$. Note that $\mathcal{H}(q_0)$ does not depend on the location parameter $\nu$. The gradient of eq.~(\ref{eq:KL-convex-nu}) with respect to~$\nu$ is therefore given~by
\begin{align}
\nabla_\nu \KL(q_\nu||p)
  &= -\int\!\! \nabla_\nu\big[\log p(\nu\!+\!\zeta)\big]\,q_0(\zeta)\,\text{d}\zeta, \label{eq:diff-nu-thru} \\
  &= -\int\!\! \nabla_\zeta\big[\log p(\nu\!+\!\zeta)\big]\,q_0(\zeta)\,\text{d}\zeta \label{eq:grad-swap},
\end{align}
where in eq.~(\ref{eq:diff-nu-thru}) we have differentiated under the integral sign, and in eq.~(\ref{eq:grad-swap}) we have exploited the symmetric appearance of $\zeta$ and $\nu$ inside the gradient. Now suppose $\nu\!=\!\mu$. In this case the gradient is given~by
\begin{equation}
  \nabla_\nu \KL(q_\nu||p)\Big|_{\nu=\mu}\!\! = -\int\!\! \nabla_\zeta\big[\log p(\mu\!+\!\zeta)\big]\,q_0(\zeta)\, \text{d}\zeta.
  \label{eq:grad-nu-integral}
\end{equation}
By assumption $p$ is even-symmetric about $\mu$, satisfying $p(\mu\!+\!\zeta) = p(\!\mu-\!\zeta)$. It follows that the gradient in the integrand of eq.~(\ref{eq:grad-nu-integral}) is odd-symmetric, with
\begin{equation}
   \nabla_\zeta\big[\log p(\mu\!+\!\zeta)\big] = -\nabla_\zeta\big[\log p(\mu\!-\!\zeta)\big].
\end{equation}
Also recall, by assumption, that $q_0$ is even-symmetric about the origin, satisfying $q_0(\zeta) = q_0(-\zeta)$.
Thus the integrand in eq.~(\ref{eq:grad-nu-integral}) is the product of an odd-symmetric and even-symmetric function, and hence an odd-symmetric function in its own right; accordingly, the integral must vanish when it is performed over all of $\mathbb{R}^d$. This proves the first claim of the theorem that $\nabla_\nu \KL(q_\nu||p)\!=\!0$ when $\nu\!=\!\mu$.

Now suppose that $\log p$ is concave on $\mathbb{R}^d$ and strictly concave on some open set of $\mathbb{R}^d$. Under these assumptions, we show in Lemma~\ref{lemma:KL} of appendix~\ref{app:lemma-KL} that $\KL(q_\nu||p)$ is  strictly convex in~$\nu$, and hence the stationary point at $\nu\!=\!\mu$ corresponds to a unique minimizer. This proves the theorem.
\end{proof}

\begin{remark}
\label{remark:more-parameters}
VI also recovers the mean of $p$ when $\mathcal{Q}$ is a location-scale family or a subfamily thereof (e.g., one where the scale matrix is restricted to be diagonal). In this setting, the same steps yield a unique minimizer at $\nu\!=\!\mu$ for any value of the scale parameter.
\end{remark}

\subsection{Illustrative examples}  \label{sec:location-illustration}

\begin{figure}
    \centering
    \includegraphics[width=1.6in]{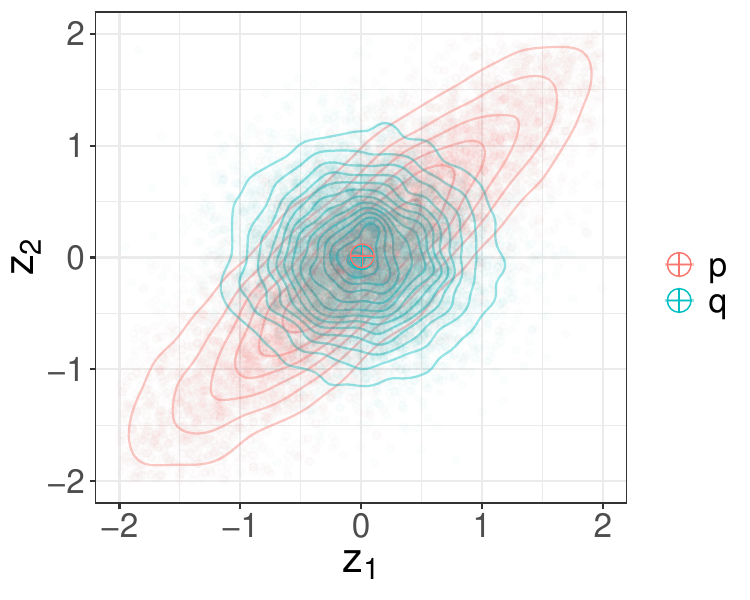}
    \includegraphics[width=1.6in]{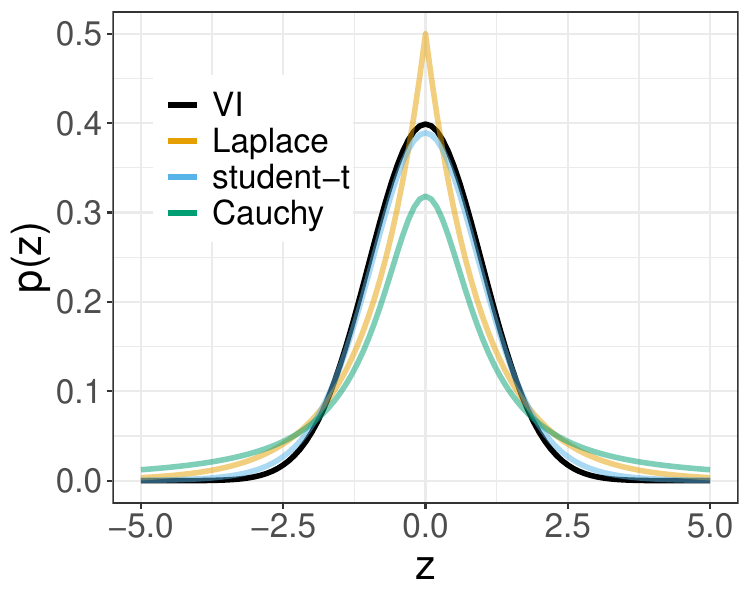}
    \caption{\textit{
    Robustness of VI to misspecifications. Left: VI with a factorized Gaussian, $q$, exactly recovers the mean of a multivariate student-t, $p$. Right: VI with a univariate Gaussian recovers the point of symmetry in target densities with different tails. This point equals the mean when the target is Laplace or student-t, but not when it is Cauchy (whose mean does not exist).}}

    \label{fig:tail_robust}
\end{figure}

The content of the theorem is best illustrated by example. We emphasize that the conditions of the theorem 
allow for severe misspecifications in which the target density $p$ cannot be well approximated over its whole domain by any $q\!\in\!\mathcal{Q}$. Our first examples illustrate how VI behaves when $\mathcal{Q}$ is misspecified but still satisfies the conditions of the theorem---in particular, when $q$ is factorized, but~$p$ is not, and when $p$ is heavy-tailed, but $q$ is not.

\paragraph{Non-factorized target with heavy tails.}  Figure~\ref{fig:tail_robust} (\textit{left}) illustrates how VI behaves when $\mathcal{Q}$ is the family of Gaussians with diagonal covariance matrices and $p$ is a heavy-tailed two-dimensional \mbox{student-t} with $k\! =\! 10$ degrees of freedom and correlation 0.9 between $z_1$ and~$z_2$.
Despite these misspecifications, $p$ and $\mathcal Q$ satisfy all the conditions of the theorem, and VI exactly recovers the mean of $p$.


\paragraph{Other heavy-tailed targets.} Figure~\ref{fig:tail_robust} (\textit{right}) illustrates how VI behaves when $\mathcal{Q}$ is the family of one-dimensional Gaussians and $p$ is (i) a Laplace distribution, (ii) a student-t distribution with $k\!=\!10$ degrees of freedom, and (iii) a Cauchy distribution.
In all of these cases, VI recovers the point of symmetry of~$p$; this point is equal to the mean of $p$ for the Laplace and student-t distributions, and to the median of $p$ for the Cauchy distribution (whose mean does not exist).





\begin{figure}
  \centering
  \includegraphics[width=0.49\linewidth]{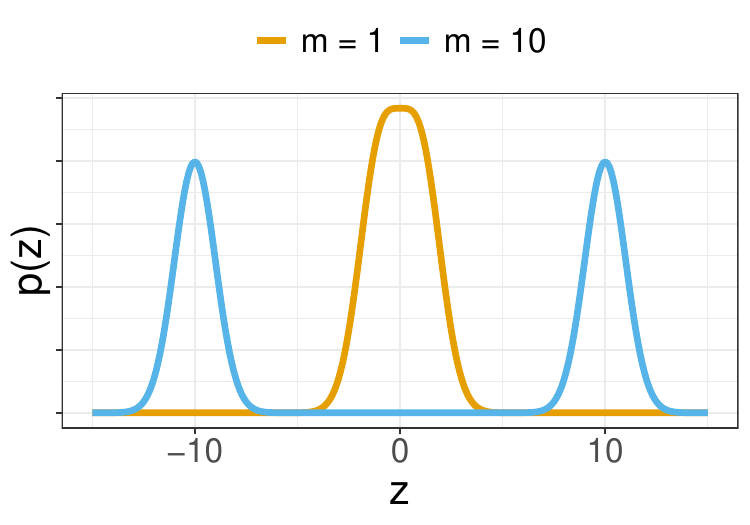}
  \includegraphics[width=0.48\linewidth]{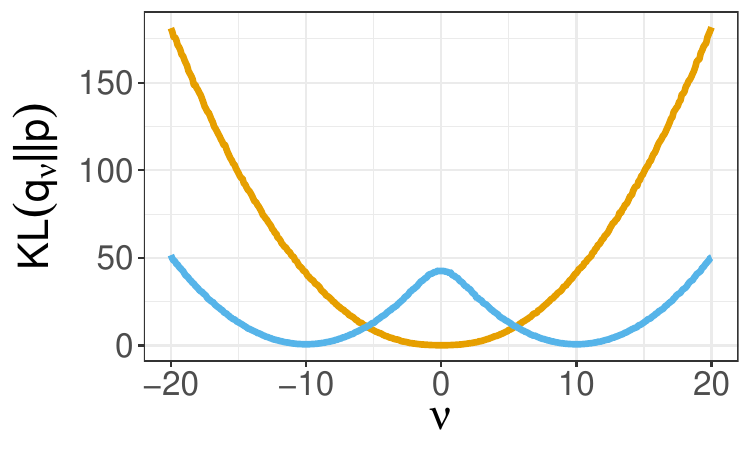}
  \caption{\textit{Approximating the mixture $p$ of two normals in eq.~(\ref{eq:mixture}) when the modes are close ($m\!=\!1$) or well separated ($m\!=\!10$).
  Left: probability densities.
  Right: KL divergence between $p$ and a single normal $q_\nu$ with mean $\nu$.
  Per Theorem~\ref{thm:location}, $\KL(q_\nu||p)$ has a stationary point at $\nu\!=\!0$. This stationary point is a minimizer when $m\!=\!1$ but a local maximizer when~\mbox{$m\!=\!10$}.}}
  \label{fig:mixture}
\end{figure}

Our next examples illustrate how VI behaves when the conditions of the theorem are \textit{not} satisfied.

\paragraph{Bimodal target.} Figure~\ref{fig:mixture} illustrates how VI behaves when fitting a normal approximation to a balanced mixture of two univariate Gaussians:
\begin{equation}
  p(z) = 0.5 \ \mathcal N(z; -m, 1) + 0.5 \ \mathcal N(z; m, 1),
  \label{eq:mixture}
\end{equation}
In this example, $p$ is symmetric about the origin, but it is not log-concave, thus violating one of the theorem's conditions. The figure illustrates two cases---one where the modes of $p$ are close, with $m\!=\!1$ in eq.~(\ref{eq:mixture}), and another, where they are well separated, with $m\!=\!10$.
Per the theorem, $\KL(q_\nu||p)$ has a stationary point at the origin. However, when $p$ has well-separated modes, this stationary point is a not a local minimizer of $\KL(q||p)$; instead it is a local \textit{maximizer}. In this case, VI would not center its approximation at the origin, but instead at one of the two modes.

\begin{figure}
  \begin{center}
  \includegraphics[width=3.25in]{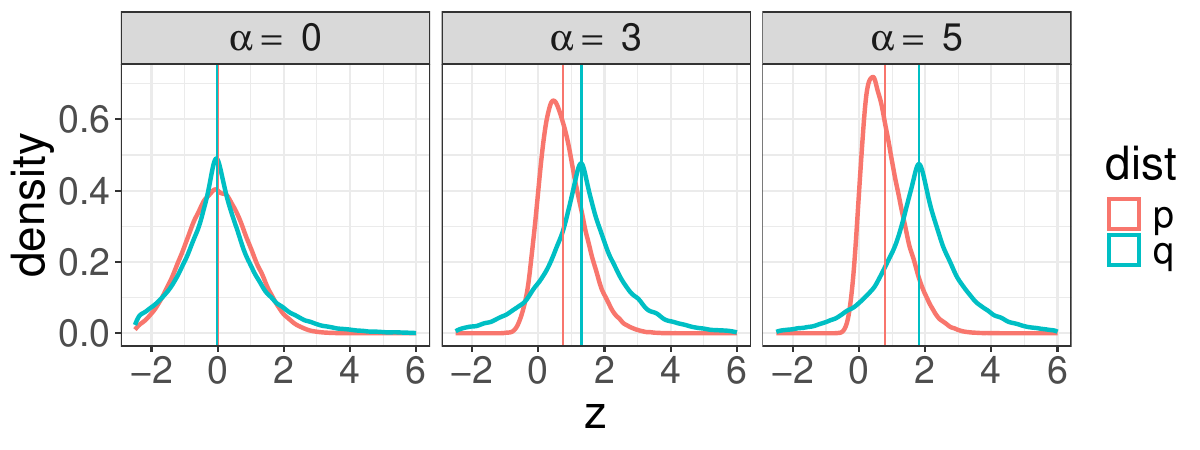}
  \caption{\textit{VI approximation of a skewed normal distribution, $p$, by a Laplace distribution, $q$. 
  The vertical lines indicate the mean of each distribution.
  Per Theorem~\ref{thm:location}, VI correctly estimates the mean of $p$ when $\alpha\!=\!0$ and $p$ is symmetric. 
  However, VI's estimate worsens as $\alpha$ increases and $p$ becomes less symmetric.}}
  \label{fig:skewed}
  \end{center}
\end{figure}

\paragraph{Asymmetric target.} 
Figure~\ref{fig:skewed} illustrates how VI misestimates the mean when $p$ is not symmetric. In this example, we consider
\begin{eqnarray} 
  p(z) &=& \text{skewed-Normal}(z ; 0, 1, \alpha) \nonumber \\
  q(z) & = & \text{Laplace}(z ; \nu, 1),
\end{eqnarray}
where $\alpha \in \mathbb R$ controls the skewness of $p$,
and the location $\nu$ of $q$ is obtained by minimizing $\KL(q||p)$. 
Note that in this example, because $q$ is Laplace, the variational approximation is misspecified even when $p$ is not skewed.
The greater the skewness of $p$, the less accurate is VI's estimate of the mean. 

\section{EXACT RECOVERY OF CORRELATION MATRIX}

Next we prove that under fairly broad conditions VI can exactly recover the correlation matrix of the target density. Here again we assume that $p$ and the base density $q_0\!\in\!\mathcal{Q}$ have support on all of $\mathbb{R}^d$; we also assume the regularity conditions on $p$ in section~\ref{sec:regularity}.


\subsection{Theoretical statement}

Our second theorem shows how VI is able to capitalize on an elliptical symmetry of the target density.

\begin{theorem}[Exact Recovery of Correlation Matrix]
\label{thm:scale}
Let $\mathcal{Q}$ be a location-scale family whose base distribution $q_0$ is spherically symmetric. If $p$ is elliptically symmetric about $\mu$ with scale matrix $M$, and if $\log p$ is concave on $\mathbb{R}^d$ and strictly concave on some open set of $\mathbb{R}^d$, then $\KL(q||p)$ has a unique minimizer with respect to the location-scale parameters $(\nu,S)$ of~$\mathcal{Q}$ at $\nu\!=\!\mu$ and $S=\gamma^2 M$ for some $\gamma\in\mathbb{R}$.
Per Remark~\ref{rem:corr}, the correlation matrix is recovered at this minimizer. 
\end{theorem}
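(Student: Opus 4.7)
The proof naturally splits into showing $\nu=\mu$ and then $S=\gamma^2 M$. The first part is essentially free: since $p$ is elliptically symmetric about $\mu$ it is also even-symmetric about $\mu$, and since $q_0$ is spherically symmetric it is even-symmetric about the origin. Therefore Theorem~\ref{thm:location} applies, and by Remark~\ref{remark:more-parameters} the unique minimizer over $\nu$ (for any fixed $S$) is $\nu=\mu$.

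To handle $S$, I would first reduce to an isotropic target by the linear change of variables $x = M^{-\frac{1}{2}}(z-\mu)$, which carries $p$ to its spherically symmetric base $p_0$ and $q_{\mu,S}$ to a location-scale density with location $0$ and scale $\tilde S = M^{-\frac{1}{2}} S M^{-\frac{1}{2}}$. The non-obvious step is that the pushforward of $q_{\mu,S}$ naturally involves the non-symmetric transformation $M^{-\frac{1}{2}}S^{\frac{1}{2}}\zeta$ of $\zeta\sim q_0$; because $q_0$ is spherically symmetric, however, the distribution of $A\zeta$ depends on $A$ only through $AA^{T}$, so this has the same law as $\tilde S^{\frac{1}{2}}\zeta$. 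By invariance of the KL divergence under a bijective change of variables, and because $S\leftrightarrow\tilde S$ is a bijection on the positive-definite cone, it suffices to show that $\KL(q_{0,\tilde S}||p_0)$ is uniquely minimized by $\tilde S = \gamma^2 I$.

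For this reduced problem, I would reparameterize by the symmetric square root $T = \tilde S^{\frac{1}{2}}$ and write $\Psi(T) = -\mathcal{H}(q_0) - \log|T| - \int \log p_0(T\zeta)\,q_0(\zeta)\,\mathrm{d}\zeta$. The function $-\log|T|$ is strictly convex on the SPD cone, and $T\mapsto -\log p_0(T\zeta)$ is convex because $-\log p_0$ is convex (log-concavity of $p$ is preserved by the linear reduction) and $T\zeta$ is linear in $T$; hence $\Psi$ is strictly convex in $T$. Moreover, $\Psi$ is orthogonally invariant: substituting $\eta = U^{T}\zeta$ in the integral and using spherical symmetry of $q_0$ (to eliminate $U^{T}$ from the measure) and of $p_0$ (to eliminate $U$ from the integrand) gives $\Psi(UTU^{T}) = \Psi(T)$ for every orthogonal $U$, while $\log|UTU^{T}|=\log|T|$ handles the determinant. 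Strict convexity plus this invariance forces any minimizer $T^{*}$ to satisfy $UT^{*}U^{T} = T^{*}$ for every orthogonal $U$, and the only such SPD matrices are the scalar multiples of the identity, so $T^{*}=\gamma I$; hence $\tilde S=\gamma^{2}I$ and $S=\gamma^{2}M$, from which the correlation claim follows via Remark~\ref{rem:corr}.

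The main obstacle I anticipate is establishing existence of the minimizer, i.e., coercivity of $\Psi$: the function must diverge as $T$ approaches the boundary of the SPD cone (handled by $-\log|T|$) and as $\|T\|\to\infty$. For the latter, one uses that any log-concave integrable density has at least linear decay in $\log p_0$, so the integral term grows at least linearly in the largest eigenvalue of $T$ and dominates $-\log|T|$. The reduction step also requires care in identifying distributions through the Gram matrix of the transformation; once these pieces are verified, the convexity-plus-symmetry argument delivers the conclusion cleanly.
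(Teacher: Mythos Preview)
Your argument is correct. The location step matches the paper's, but for the scale parameter you take a genuinely different route. The paper parameterizes by $J=M^{-\frac{1}{2}}S^{\frac{1}{2}}$, computes $\nabla_{J}\KL$ explicitly, and verifies the ansatz $J=\gamma I$ by showing that the off-diagonal entries of the gradient integral vanish (odd/even symmetry of the integrand) and that the diagonal entries coincide; it then pins down $\gamma$ as the unique root of a one-dimensional equation, using the monotonicity of $f'$ implied by log-concavity. You instead pass to the symmetric square root $T=\tilde S^{\frac{1}{2}}$ on the SPD cone---a clean move, since $J$ itself is not symmetric and the domain on which $-\log|J|$ is convex is less transparent---and invoke an invariance principle: $\Psi$ is strictly convex and satisfies $\Psi(UTU^{T})=\Psi(T)$ for every orthogonal $U$, so any minimizer must commute with all $U$ and hence be a scalar multiple of $I$. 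This is more conceptual and avoids the gradient bookkeeping entirely. The trade-off is that the paper's approach is constructive and directly exhibits a stationary point, so no separate existence argument is needed, whereas your route relies on coercivity. Your coercivity sketch is correct in outline (log-concave integrable densities do have at least linear growth of $-\log p_0$, and this linear term in $\lambda_{\max}(T)$ dominates $-\sum_i\log\lambda_i$); if you prefer to avoid it, you can first minimize on the ray $\{\gamma I:\gamma>0\}$, which is a one-dimensional coercive problem, and then note that the gradient of $\Psi$ at the ray minimizer is itself orthogonally invariant, hence a multiple of $I$, hence zero.
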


\begin{proof}
Let $q\!\in\!\mathcal{Q}$ with location-scale parameters $(\nu,S)$. Write $q(z)\!=\!q_0(\zeta)|S|^{-\frac{1}{2}}$ where $\zeta\! =\! S^{-\frac{1}{2}}(z\!-\!\nu)$.
Then, analogous to eqs.~(\ref{eq:KL-nu}--\ref{eq:KL-convex-nu}), we can compute
\begin{align}
\KL(q||p) 
 &= -\mathcal{H}(q_0)\, - \tfrac{1}{2}\log|S|
 \nonumber \\ 
 & \mbox{\hspace{4ex}} - \int\! \log p\Big(S^{\frac{1}{2}}\zeta\!+\!\nu\Big)\,q_0(\zeta)\,\text{d}\zeta.
 \label{eq:KL1-scale}
 \end{align}
By the assumption of elliptical symmetry, there exists a spherically symmetric density $p_0$ such that 
\begin{equation}
p(z) = p_0\Big(M^{-\frac{1}{2}}(z\!-\!\mu)\Big)|M|^{-\frac{1}{2}}.
\label{eq:p0-spherical}
\end{equation}
Substituting the above into eq.~(\ref{eq:KL1-scale}), we obtain a generalization of eq.~(\ref{eq:KL-convex-nu}) to location-scale  families:
\begin{align*}
\KL(q||p) 
  &= -\mathcal{H}(q_0) -\tfrac{1}{2}\log|S| + \tfrac{1}{2}\log|M| \nonumber \\
 & \mbox{\hspace{2ex}} - \int\! \log p_0\Big(M^{-\frac{1}{2}}\big[S^{\frac{1}{2}}\zeta\!+\!\nu\!-\!\mu\big]\Big)\,q_0(\zeta)\,\text{d}\zeta.
 \end{align*}
Since~$p_0$ and $q_0$ are spherically symmetric, they are also even-symmetric about the origin; per Theorem~\ref{thm:location}, the above is minimized with respect to the location parameter when $\nu\!=\!\mu$. Eliminating~$\nu$, we find
\begin{align}
\KL(q||p) 
  &= -\mathcal{H}(q_0) -\tfrac{1}{2}\log|S| + \tfrac{1}{2}\log|M| \nonumber \\
  & \vspace{3ex} -\, \int\! \log p_0\big(M^{-\frac{1}{2}}S^{\frac{1}{2}}\zeta\big)\,q_0(\zeta)\,\text{d}\zeta.
  \label{eq:KL2-scale}
 \end{align}
 To prove the theorem we must minimize the right side of eq.~(\ref{eq:KL2-scale}) with respect to the scale parameter $S$, or equivalently, with respect to the matrix~$J=M^{-\frac{1}{2}} S^\frac{1}{2}$. In terms of this matrix, eq.~(\ref{eq:KL2-scale}) simplifies to
\begin{equation}
\KL(q||p) 
  = -\mathcal{H}(q_0) -\log|J| - \int\! \log p_0(J\zeta)\,q_0(\zeta)\,\text{d}\zeta.
  \label{eq:KL3-scale}
\end{equation}
Eq.~(\ref{eq:KL3-scale}) shows that after eliminating the location parameter, the remaining objective for VI is strictly convex in~$J$; this property follows the strict concavity of $\log|J|$ and the assumption that $p$ (and hence also $p_0$) is log-concave. Hence, any stationary point of eq.~(\ref{eq:KL3-scale}) corresponds to a unique global minimizer. We will show that such a stationary point occurs when
 \begin{equation}
 J=\gamma I
 \end{equation}
for some $\gamma\!>\!0$. The rest of the proof uses the symmetry of $p_0$ and $q_0$ to verify this ansatz, which in turn implies that $S\!=\!\gamma^2 M$.

Since $p_0$ and $q_0$ are spherically symmetric, we can define functions $f,g:\mathbb{R}\rightarrow\mathbb{R}$ by the relations
\begin{align}
f(\|J\zeta\|) & = \log p_0(J\zeta), \label{eq:fzeta}\\
g(\|\zeta\|) &= q_0(\zeta).
\end{align}
We can also compute the gradient of eq.~(\ref{eq:KL3-scale}) with respect to $J$ in terms of these functions; it is given by
\begin{equation}
\nabla_J \KL(q||p) = -J^{-1} - \int\! f'(\|J\zeta\|)\,\frac{J\zeta\zeta^\top}{\|J\zeta\|}\, g(\|\zeta\|)\, \text{d}\zeta.
\end{equation}
If a minimizer exists at $J\!=\!\gamma I$, then this gradient must vanish there; equivalently, it must be the case that
\begin{equation}
    \gamma^{-1} I = -\int\!f'(\gamma\|\zeta\|)\frac{\zeta\zeta^\top}{\|\zeta\|}\,g(\|\zeta\|)\,\text{d}\zeta.
\label{eq:lambda}
\end{equation}
We complete the proof by showing that eq.~(\ref{eq:lambda}) determines a unique solution for some $\gamma\!>\!0$.


First we verify that the right side of eq.~(\ref{eq:lambda}) is a scalar multiple of the identity matrix.
To do so, we consider how to perform the $i^\text{th}$ component of the integration,
\begin{equation}
  \int \text{d}\zeta_{\backslash\{i\}}  
  \int \text{d}\zeta_i\, \Big[f'(\gamma\|\zeta\|)\,g(\|\zeta\|)\,\|\zeta\|^{-1}\Big] \zeta_j \zeta_i
\end{equation}
and note that the bracketed term in the integrand is spherically symmetric (and hence also invariant under the change of variables $\zeta_i\rightarrow-\zeta_i$). Now if $i\!\neq\!j$, then the integrand as a whole is an odd-symmetric function of $\zeta_i$, and hence the integral over all positive and negative values of $\zeta_i$ vanishes. On the other hand, if $i\!=\!j$, then from the spherical symmetry of the bracketed term it follows that the integral has the same value for all $i$. This implies, in turn, that all the diagonal elements on the right side of eq.~(\ref{eq:lambda}) are equal.

%
%
%
%
%

It remains to prove that a solution exists for $\gamma$. Since each side of eq.~(\ref{eq:lambda}) is a scalar multiple of the identity matrix, we can solve for $\gamma$ by equating their traces:
\begin{eqnarray}
d\gamma^{-1} & = & -\int\!f'(\gamma\|\zeta\|)\,g(\|\zeta\|)\,\|\zeta\|^{-1}\,\sum_{i} \zeta^2_i\,\text{d}\zeta \nonumber \\
& = & - \int\!f'(\gamma\|\zeta\|)\,g(\|\zeta\|)\,\|\zeta\|\,\text{d}\zeta.
\label{eq:traces}
\end{eqnarray}
 We proceed by evaluating the integral 
in spherical coordinates with $r\!=\!\|\zeta\|$.
After integrating out the angular coordinates, 
eq.~(\ref{eq:traces}) reduces to
\begin{equation}
d\gamma^{-1} = -\frac{2\pi^{\frac{d}{2}}}{\Gamma \left(\frac{d}{2} \right)}\!\int\!f'(\gamma r)\,g(r)\,r^d \text{d}r,
\label{eq:spherical-integral}
\end{equation}
where we leveraged the fact the surface area of a $d$-dimensional hypersphere of radius $r$ is 
\mbox{$2\pi^{\frac{d}{2}} r^{d-1}/\Gamma \left(d/2 \right)$}. 
Note that the left side of eq.~(\ref{eq:spherical-integral}) decreases monotonically with $\gamma$ from infinity to zero; thus the right side will determine a unique solution if it is a positive \textit{non-decreasing} function of~$\gamma$.
Now, the log concavity of $p$ implies that the derivative of $\log p$ is monotone.
Then, it follows from eqs.~(\ref{eq:p0-spherical}) and~(\ref{eq:fzeta}) that $-f'(\gamma r)$ is a non-decreasing function of~$\gamma$. It must also be true that $-f'(0)\geq 0$, since otherwise $\log p$ would have a non-concave kink at the origin. Thus the right side of eq.~(\ref{eq:spherical-integral}) is indeed a positive non-decreasing function of~$\gamma$, and a unique solution for $\gamma$ is determined.
 \end{proof}

\subsection{Illustrative examples}

\begin{figure}
    \centering
    \includegraphics[width=0.49\linewidth]{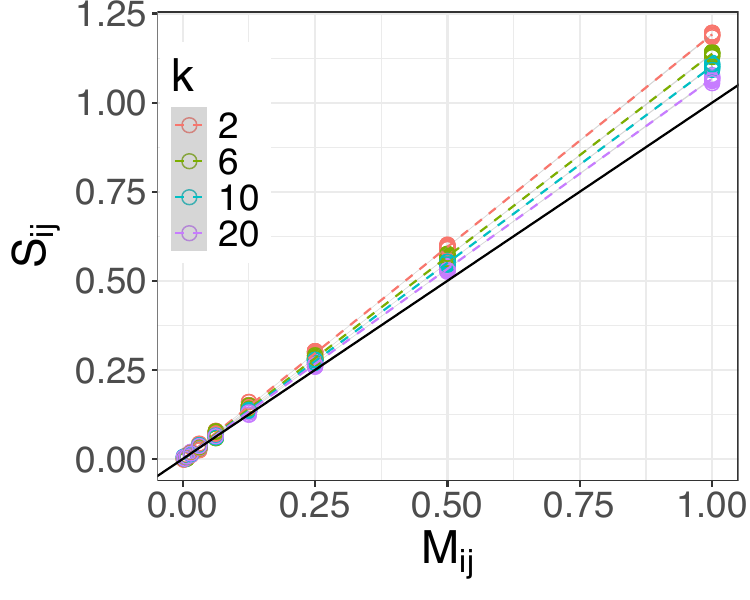}
    \includegraphics[width=0.49\linewidth]{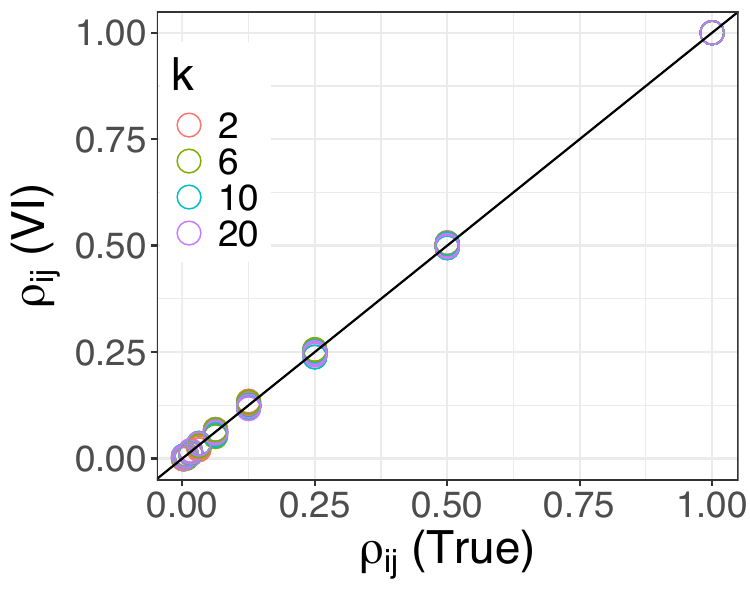}
    \caption{\textit{Gaussian approximation of VI to a multivariate student-t with varying degrees of freedom, $k$.
    Left: VI's scale matrix $S$ equals the target's scale matrix $M$, up to a multiplicative constant, which varies with $k$.
    Right: for all $k$, VI exactly recovers the elements $\rho_{ij}$ of the correlation matrix.
    }   
    }
    \label{fig:multi-student-t}
\end{figure}


Figure~\ref{fig:multi-student-t} illustrates how VI with a Gaussian scale-location family recovers the correlation matrix of multivariate Student-t target densities in $\mathbb{R}^{10}$. We set
\begin{align}
    p(z) &= \text{multi-student-t}(z;k,0,M), \\
    q(z) &= \mathcal N(z ; \nu, S), \label{eq:gvi}
\end{align}
where $k$ denotes the number of degrees of freedom in the Student-t target, $M$ its scale matrix, and $S$ the covariance matrix of the Gaussian approximation.
In this example, the elements of $M$ were one along the diagonal and constant off the diagonal ($S_{ij} = 0.9$ for $i\neq j$).
The results in the figure were obtained by ADVI for target densities with heavier tails ($2\!\leq\!k\!\leq\!20$) than $q$, particularly for small values of $k$. The results verify Theorem~\ref{thm:scale}: $S$ matches~$M$ up to a multiplicative constant and thus yields the exact correlation matrix of~$p$.

Figure~\ref{fig:scale_recovery} (right) shows how VI behaves on the two-dimensional Rosenbrock or \textit{crescent} distribution,
%
%
\begin{align} \label{eq:crescent}
   p(z_1) &= \mathcal N(0, 10^2), \nonumber \\
   p(z_2|z_1) &= \mathcal N(0.03 (z_1^2\!-\! 100), 1),
\end{align}
which does \textit{not} have elliptical symmetry. In this case, the approximation in eq.~(\ref{eq:gvi}) 
recovers neither the mean nor the correlation; see also Table~\ref{tab:targets}. Note, however, that the Rosenbrock distribution is symmetric about the line $z_1\!=\!0$, and that VI does correctly recover the first component of its mean.

\begin{figure}
    \centering
    \includegraphics[width=1.6in]{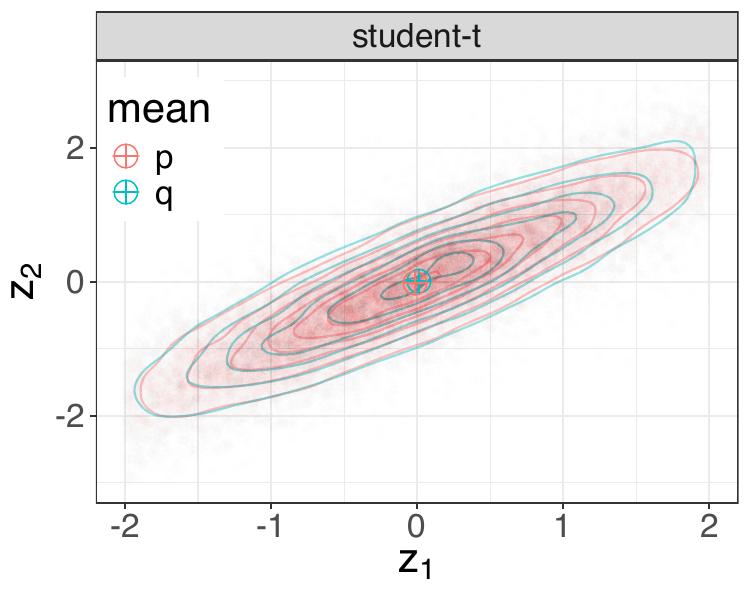}
    \includegraphics[width=1.6in]{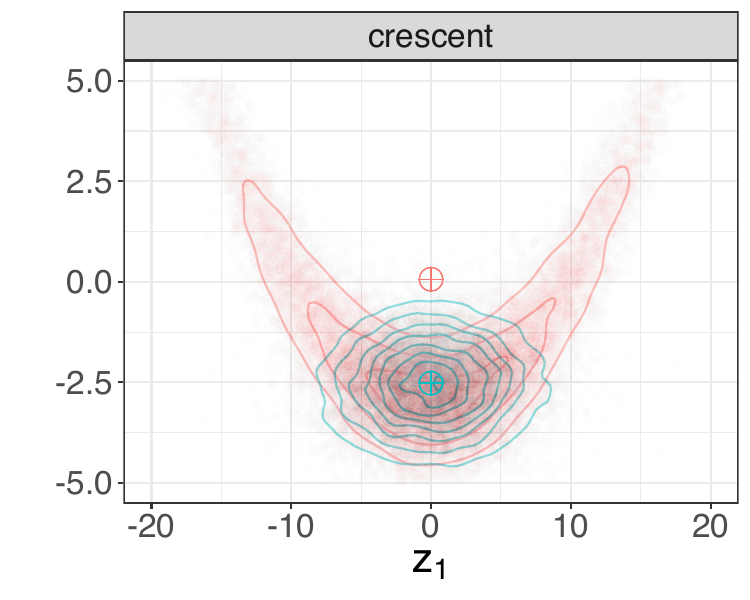}
    \caption{\textit{VI with a Gaussian approximation and dense covariance matrix.
    Left: VI recovers the mean, correlation, and iso-probability contours when the target (student-t) has elliptical symmetry. Right: VI recovers neither the mean nor correlation when the target (crescent) has neither even nor elliptical symmetry.
    }}

    \label{fig:scale_recovery}
\end{figure}

\section{NUMERICAL EXPERIMENTS}  \label{sec:experiments}

We now investigate VI on the collection of target distributions in Table~\ref{tab:targets}. Three of these distributions are synthetic: \texttt{student} is a student-t distribution with elliptical symmetry and heavy tails, while \texttt{mixture} and \texttt{crescent} are not even-symmetric. 
The others in Table~\ref{tab:targets} arise as posteriors of Bayesian models; these include a centered hierarchical model (\texttt{8schools}) with 
a highly asymmetric posterior \citep{Gelman:2013}, a non-centered parameterization of the same model (\texttt{8schools\_nc}) with better geometry, a binomial regression (\texttt{GLM}), a Gaussian process (\texttt{disease\_map}) \citep{Vanhatalo:2019}, and a sparse kernel interaction model (\texttt{SKIM}) \citep{Agrawal:2019}, applied to a genomic data set \citep{Margossian:2020}.
It is unclear how much symmetry is present in the posteriors of the last three models.
Further details of each model can be found in Appendix~\ref{app:targets}.
We fit these targets in \texttt{Stan} with both ADVI and long runs of MCMC, and we use the latter as a stand-in for ground truth.

\begin{table*}
    \centering
    \begin{tabular}{l r l r r r r}
         \rowcolor{gray!20}  {\bf Call name} & $d$ & {\bf Description} & $\varepsilon_{90}$ & $\Delta_\text{mean}$& $\Delta_\text{corr}$ & $\Delta_\text{cov}$ \\
    \texttt{student} & 2 & Elliptical target with heavy tails. & 0 & 0.014 & 2.42e-3 & 0.093  \\
    \rowcolor{gray!20} \texttt{disease\_map} & 102 & Gaussian process model for epidemiology data. & 9.22e-4 & 0.026 & 0.017 & 5.29 \\
    \texttt{GLM} & 3 & Binomial regression for ecological data. & 2.73e-3 & 0.025 & 0.10 & 6.71 \\
    \rowcolor{gray!20} \texttt{8schools\_nc} & 10 & Non-centered hierarchical model for education data. & 0.50 & 0.004 & 0.010 & 1.01 \\
    \texttt{mixture} & 2 & Mixture of two Gaussians with different scales. & 0.54 & 0.26 & 0.27 & 23.40 \\
    \rowcolor{gray!20} \texttt{SKIM} & 305 & Sparse kernel interaction model for genetic data. & 0.82 & 0.12 & 0.014 & 8.30 \\
    \texttt{8schools} & 10 & Centered hierarchical model for education data. & 29.31 & 0.68 & 0.11 & 0.90 \\
    \rowcolor{gray!20} \texttt{Crescent} & 2 & Rosenbrock distribution. & 93.48 & 0.25 & 0.26 & 1.395
    \end{tabular}
    \caption{\textit{Scaled errors ($\Delta_\text{mean},\Delta_\text{corr},\Delta_\text{cov}$) 
    from VI when estimating the mean, correlation, and covariance of the target density, $p$.
    Errors are averaged across all elements of mean vectors and correlation/covariance matrices, and 
    data sets are ordered by the degree to which $p$ violates the assumption of even symmetry, as measured by~$\varepsilon_{90}$ in eq.~(\ref{eq:varepsilon}).
    As a trend, less symmetric targets yield worse estimates of the mean and correlation; also, the correlation can be well-estimated even when the covariance is not. 
    }}
    \label{tab:targets}
\end{table*}


For each distribution in Table ~\ref{tab:targets}, we use a stochastic procedure to assess whether it is approximately even-symmetric.
Given a draw $z$ from $p(z)$---obtained 
by MCMC for the more complex distributions---we compute the \textit{reflected} sample $z' = 2\mu\!-\!z$,
%
%
where $\mu$ is the (estimated) mean of $z$ with respect to the density~$p$. 
For each $(z,z')$ pair, we then compute the violation 
\begin{equation} \label{eq:varepsilon}
    \varepsilon(z) = \left |\frac{\log \pi(z|x) - \log \pi(z'|x)}{\log \pi(z| x)} \right|.
\end{equation}
If $\pi(z|x)$ is even-symmetric about $\mu$, then \mbox{$\varepsilon(z)$} vanishes for all $z$.
We evaluate $\varepsilon(z)$ at many (MCMC) samples and report the 90$^\text{th}$ quantile $\varepsilon_{90}$ in Table~\ref{tab:targets}.

For each target, we also report how well ADVI estimates its mean, correlation, and covariance.
We~use
\begin{equation} \label{eq:scaled}
    \Delta_\text{mean} = \frac{|\mathbb E_p (z) - \mathbb E_q (z) |}{\text{max}(\sqrt{\text{Var}_p (z)}, |\mathbb E_p(z)|)}
\end{equation}
to measure the error of the estimated means. Eq.~(\ref{eq:scaled}) is both scale-free and numerically stable; see Appendix~\ref{app:algorithms} for further justification of this choice.
We also measure the absolute relative error of the covariances and the absolute error of the correlations (which are already scale-free); in Table~\ref{tab:targets} we denote these, respectively, 
by $\Delta_\text{cov}$ and $\Delta_\text{corr}$.

\begin{figure}
    \begin{center}
    \includegraphics[width=3.5in]{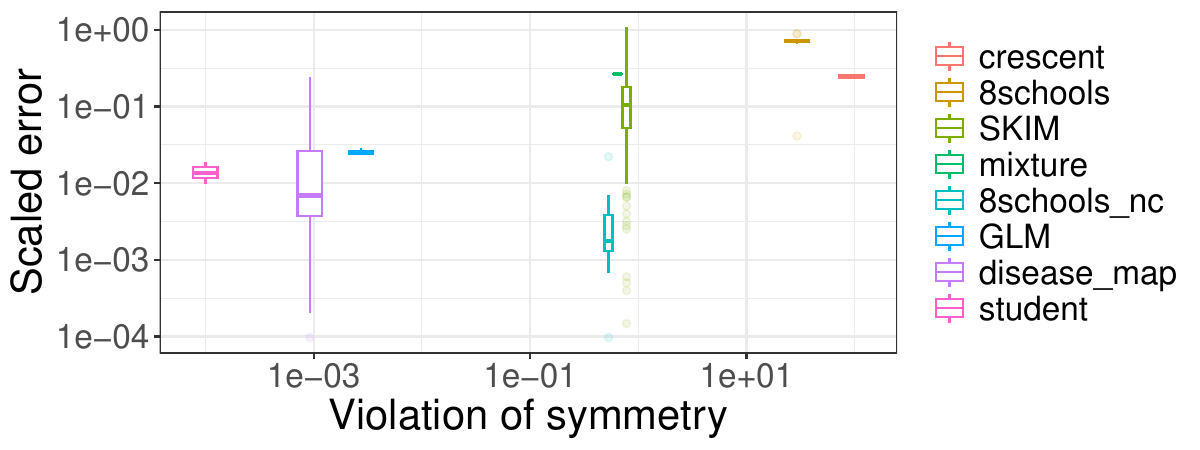}
    \end{center}
   \caption{\textit{Scaled error in VI's estimates of the mean, using a Gaussian approximation, versus the asymmetry of its targets. The box plot summarizes the errors for all latent variables.
    As a trend, VI's estimates are better when its targets are more symmetric (even if they are not Gaussian). See text for details.
    }}
   \label{fig:model_mean}
\end{figure}

The plot reveals a trend: VI yields worse estimates 
as $p$ becomes less symmetric, with the worst estimates obtained for 
the Rosenbrock distribution (\texttt{crescent}) and the centered hierarchical model (\texttt{8schools}). 
Table~\ref{tab:targets} also reports the error in estimates of the correlations and covariances.
Estimates of the former are consistently better than the latter. However, the correlation matrix is estimated poorly for the three models that most clearly violate elliptical symmetry (\texttt{mixture}, \texttt{8schools}, \texttt{crescent}).

In summary, blatantly asymmetric distributions  produce VI estimators with a large error.
Conversely, targets with a near-perfect symmetry yield accurate estimators.
Still, it is unclear from our experiments how varying degrees of asymmetry affect the quality of VI estimators.
Two intriguing cases are the \texttt{8schools\_nc} and \texttt{mixture} models, which have a comparable violation of symmetry, however the error in \texttt{mixture} is noticibly larger.




\section{DISCUSSION}

In this paper, we have derived broad conditions under which VI with location-scale families exactly recovers the mean and the correlation matrix. Our proofs use basic principles of symmetry to illustrate settings in which $\KL(q||p)$ has a unique global minimizer. 
The conditions of our theorems allow for several misspecifications: notably, the approximation $q$ need not match the tail behavior of the target $p$ and, when estimating the mean, $q$ can be factorized even though $p$ is not.

In a Bayesian setting, the mean is most commonly used to summarize the posterior, and the correlation, though less often reported, is also of interest. Conceivably, one could employ the scale matrix $S$ returned by VI as a pre-conditioner for other inference algorithms, such as MCMC, which are known to converge more quickly for (approximately) spherically symmetric distributions. This is a direction for future work.

Our results assume that VI algorithms find a minimizer of $\KL(q||p)$, which in practice may not be true.
It can be challenging to optimize the ELBO  \citep{Dhaka:2020, Dhaka:2021, Agrawal:2021}; however, recent papers provide convergence guarantees for the optimizations in VI \citep{Lambert:2022, Diao:2023, Domke:2023, Kim:2023, Jiang:2023}.
These articles, combined with our paper, ensure that VI converges to useful estimates of its targets.
Some of these articles also provide directions for future research, as they consider families of approximations beyond the location-scale families in this paper.
For example, \citet{Lambert:2022} examine Gaussian mixtures, and \citet{Jiang:2023} study the family of all factorized distributions.
A natural avenue for future work is to analyze the role of symmetries in the variational approximations for these families.

It would also be interesting to study cases where the target distribution $p$ is nearly symmetric or (at the opposite extreme) highly asymmetric.
%
Another direction for future work is to bound the error of VI's estimates in terms of measurable violations of symmetry.
Such an approach would align with statistical analyses of VI in asymptotic regimes \citep[e.g.][]{Zhang:2020, Katsevich:2024}; these analyses have yielded pre-asymptotic error bounds and shown that exact estimates are recovered asymptotically.
A related goal is to construct post-hoc diagnostics which leverage empirical measures of asymmetry (as in Section~\ref{sec:experiments}). Our experiments with asymmetric distributions revealed many cases where the error of VI is unacceptably large, and these cases provides further motivation for skewed variational approximations \citep{Tan:2024}. 

Finally, an intriguing conjecture, also deserving of study, is that for each symmetry of $p$, there is a corresponding statistic that VI under mild conditions can exactly recover.

\subsubsection*{Acknowledgments}
We thank four anonymous referees for their constructive feedback and Isaac Rankin for additional feedback on our manuscript.

\bibliography{ref.bib}

%
%
%
%



\section*{Checklist}
%
%
%
%
%
\begin{enumerate}

\item For all models and algorithms presented, check if you include:
\begin{enumerate}
  \item A clear description of the mathematical setting, assumptions, algorithm, and/or model. [\textbf{Yes}]
  \item An analysis of the properties and complexity (time, space, sample size) of any algorithm. [\textbf{Not Applicable}]
  \item (Optional) Anonymized source code, with specification of all dependencies, including external libraries. [\textbf{Yes}.]
\end{enumerate}

\item For any theoretical claim, check if you include:
\begin{enumerate}
  \item Statements of the full set of assumptions of all theoretical results. [\textbf{Yes}]
  \item Complete proofs of all theoretical results. [\textbf{Yes}]
  \item Clear explanations of any assumptions. [\textbf{Yes}]     
\end{enumerate}

\item For all figures and tables that present empirical results, check if you include:
\begin{enumerate}
  \item The code, data, and instructions needed to reproduce the main experimental results (either in the supplemental material or as a URL). [\textbf{Yes}]
  \item All the training details (e.g., data splits, hyperparameters, how they were chosen). [\textbf{Yes}]
        \item A clear definition of the specific measure or statistics and error bars (e.g., with respect to the random seed after running experiments multiple times). [\textbf{Yes}]
        \item A description of the computing infrastructure used. (e.g., type of GPUs, internal cluster, or cloud provider). [\textbf{Yes}]
\end{enumerate}

\item If you are using existing assets (e.g., code, data, models) or curating/releasing new assets, check if you include:
\begin{enumerate}
  \item Citations of the creator If your work uses existing assets. [\textbf{Yes}]
  \item The license information of the assets, if applicable. [\textbf{Not Applicable}]
  \item New assets either in the supplemental material or as a URL, if applicable. [\textbf{Not Applicable}]
  \item Information about consent from data providers/curators. [\textbf{Not Applicable}]
  \item Discussion of sensible content if applicable, e.g., personally identifiable information or offensive content. [\textbf{Not Applicable}]
\end{enumerate}

\item If you used crowdsourcing or conducted research with human subjects, check if you include:
\begin{enumerate}
  \item The full text of instructions given to participants and screenshots. [\textbf{Not Applicable}]
  \item Descriptions of potential participant risks, with links to Institutional Review Board (IRB) approvals if applicable. [\textbf{Not Applicable}]
  \item The estimated hourly wage paid to participants and the total amount spent on participant compensation. [\textbf{Not Applicable}]
\end{enumerate}

\end{enumerate}

\appendix

\section{SYMMETRY OF LIKELIHOOD} \label{app:symmetry}

\begin{figure*}
    \centering
    \includegraphics[width=0.65\linewidth]{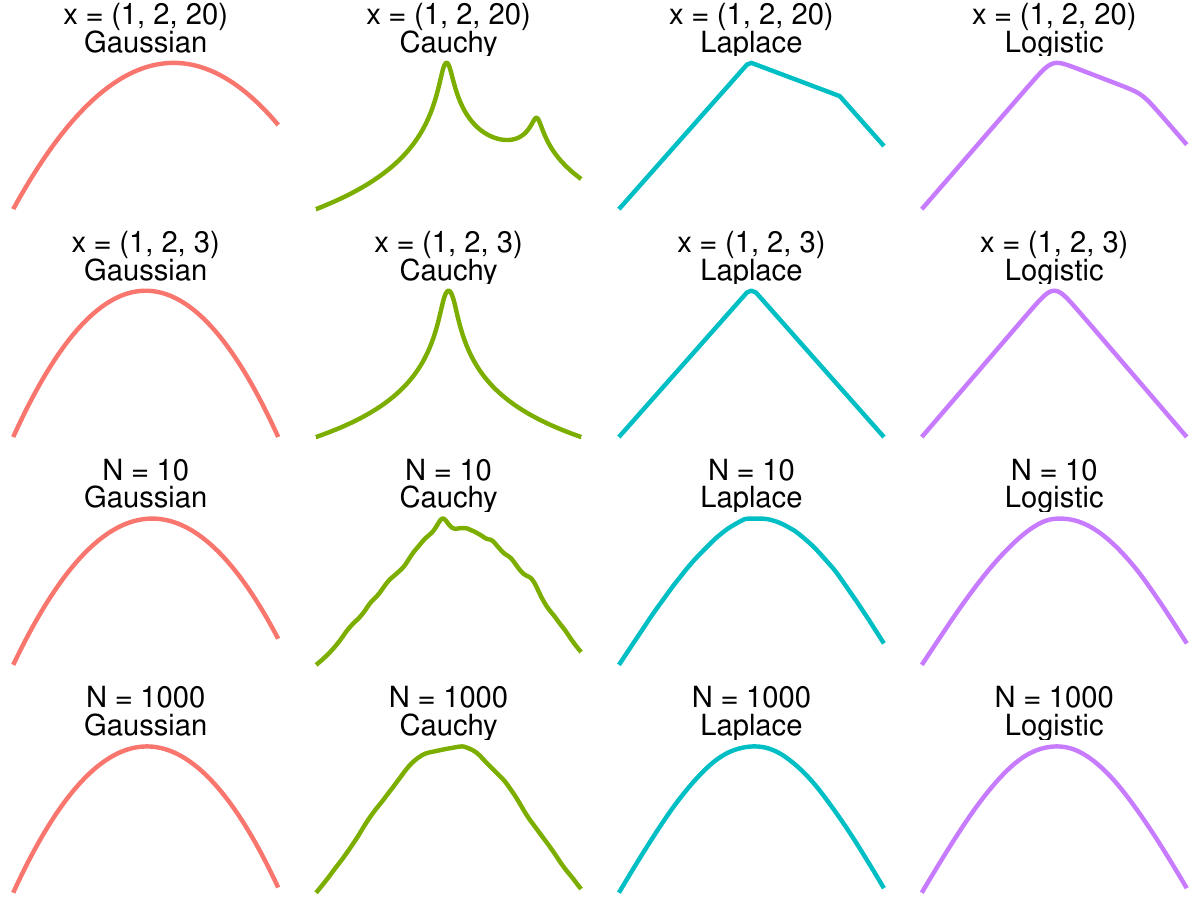}
    \caption{\textit{Symmetry of the log likelihood $f_x(z) = \log \pi(x \mid z)$. In the Gaussian case, the likelihood function is always symmetric about the mean of $x$. Other likelihoods do not in general admit a point of symmetry, as illustrated in the first row. However, if the data is symmetric, the likelihood is also symmetric (second row).
    Drawing data from a symmetric distribution generates samples which are approximately symmetric, and correspondingly approximately symmetric likelihoods.
    }}
    \label{fig:likelihood_symmetry}
\end{figure*}

In this appendix, we empirically investigate the symmetry properties of the likelihood.
We examine independent and identically distributed observations, meaning the likelihood factorizes as
\begin{equation}
    \pi(x \mid z) = \prod_{i} \pi(x_i \mid z).
\end{equation}
Recall that the likelihood is a function of $z$, with $x$ kept fixed. 
We consider the Gaussian, Cauchy, Laplace, and logistic distributions, and take $z$ to be their location parameter.
In all these cases, the single-component likelihood $\pi(x_i \mid z)$ admits a point of symmetry at $x_i$.
However, the product of these single-component likelihoods does not in general have a point of symmetry. 
The exception is the Gaussian case, where the likelihood is always symmetric about the mean of $x$.

A natural condition for the likelihood to be symmetric is for the data itself to be symmetric.
To illustrate this, we generate $N$ observations.
First we create $N = 3$ adversarial samples $x = (-1, 0, 20)$.
These observations are deliberately asymmetric, and produce asymmetric likelihoods (Figure~\ref{fig:likelihood_symmetry}).
Next we generate $N = 3$ ideal samples, $x = (-1, 0, 1)$, which are symmetric and produce symmetric likelihoods.
Finally, we randomly draw $N = 10$ and $N = 100$ observations from a symmetric distribution $p$ (somewhat arbitrarily, we chose $p$ to be Gaussian though any well-behaved symmetric distribution would do).
While no particular realization of the data is exactly symmetric, the data is ``on average'' symmetric.
Already for $N = 10$, we find that all non-Gaussian likelihoods are approximately symmetric, with the symmetry becoming stronger for $N = 1000$.

\section{SUPPORTING PROOFS}
\label{app:lemma-KL}

In this appendix we prove that the KL divergence in eqs.~(\ref{eq:KL-nu}--\ref{eq:KL-convex-nu}) is \textit{strictly} convex in the location parameter~$\nu$. This result for $\KL(q_\nu||p)$ is needed in Theorem~\ref{thm:location} to prove that VI recovers the exact mean, and it is also needed in Theorem~\ref{thm:scale} to prove that VI recovers the exact correlation matrix. 

We begin by proving two simpler propositions: these propositions show that certain basic properties of strictly convex functions extend to functions that are everywhere convex but only strictly convex on some open set. We begin, in the simplest case, by establishing a property of such functions on the real line.

\begin{proposition}
\label{prop:scR1}
Let $f:\mathbb{R}\rightarrow\mathbb{R}$ be differentiable on~$\mathbb{R}$. Also, let $x_0,x_1\!\in\!\mathbb{R}$ with $x_0\!<\!x_1$, and let 
\begin{equation}
x_\lambda = (1-\lambda)x_0 + \lambda x_1
\label{eq:x-eta}
\end{equation}
for some $\lambda\in(0,1)$. 
If $f$ is convex on $\mathbb{R}$ and strictly convex in a neighborhood of~$x_\lambda$, then
\begin{equation}
f(x_\lambda) < (1\!-\!\lambda)f(x_0) + \lambda f(x_1)
\end{equation}
where the above inequality is strict.
\end{proposition}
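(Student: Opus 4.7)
The plan is to work with secant slopes $s(a,b)=[f(b)-f(a)]/(b-a)$. A short rearrangement shows that the desired inequality $f(x_\lambda)<(1-\lambda)f(x_0)+\lambda f(x_1)$ is equivalent to $s(x_0,x_\lambda)<s(x_\lambda,x_1)$, so it suffices to establish this strict slope comparison.

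I would then use two standard facts about convex functions of one variable: for $a<b<c$, (i) convexity on an interval containing $a,b,c$ gives the chain $s(a,b)\le s(a,c)\le s(b,c)$, and (ii) strict convexity on such an interval makes both inequalities strict. The key observation is that strict convexity is only assumed on a small neighborhood $(x_\lambda-\epsilon,x_\lambda+\epsilon)$, which we may shrink so as to lie inside $(x_0,x_1)$. Pick auxiliary points $a,b$ with $x_\lambda-\epsilon<a<x_\lambda<b<x_\lambda+\epsilon$; these sandwich $x_\lambda$ inside the region of strict convexity.

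I would then chain three slope inequalities: applying (i) to $x_0<a<x_\lambda$ gives $s(x_0,x_\lambda)\le s(a,x_\lambda)$; applying (ii) to $a<x_\lambda<b$ on the strict-convexity neighborhood gives $s(a,x_\lambda)<s(x_\lambda,b)$; and applying (i) to $x_\lambda<b<x_1$ gives $s(x_\lambda,b)\le s(x_\lambda,x_1)$. Concatenating yields $s(x_0,x_\lambda)<s(x_\lambda,x_1)$, which is exactly what we want.

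The main thing to be careful about is the need for the auxiliary sandwich points $a,b$: one cannot apply strict convexity directly to the triple $x_0<x_\lambda<x_1$, since strict convexity is not assumed on all of $(x_0,x_1)$, so the strict inequality must be manufactured inside the small neighborhood and then exported outward using the weak monotonicity of secant slopes from plain convexity. I note in passing that the differentiability hypothesis stated in the proposition plays no role in this argument; it is presumably there to streamline later statements that upgrade this proposition to strict convexity of $\KL(q_\nu\|p)$ in~$\nu$.
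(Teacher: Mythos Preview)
Your argument is correct. The reduction to the strict secant-slope inequality $s(x_0,x_\lambda)<s(x_\lambda,x_1)$ is valid, and the three-step chain via auxiliary points $a<x_\lambda<b$ inside the neighborhood of strict convexity is exactly the right way to localize the strict step and then export it outward using the ordinary three-slope lemma. Each of the three slope comparisons you invoke is a direct instance of the standard chain $s(a,b)\le s(a,c)\le s(b,c)$ for $a<b<c$ (strict under strict convexity), so nothing is missing.

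Your route differs from the paper's. The paper uses the differentiability hypothesis in an essential way: it writes $f(x_0)$ and $f(x_1)$ as $f(x_\lambda)$ plus integrals of $f'$ over $[x_0,x_\lambda]$ and $[x_\lambda,x_1]$, then uses that $f'$ is non-decreasing everywhere and strictly increasing on a subinterval containing $x_\lambda$ to obtain the strict tangent-line inequalities $f(x_0)>f(x_\lambda)-(x_\lambda-x_0)f'(x_\lambda)$ and $f(x_1)>f(x_\lambda)+(x_1-x_\lambda)f'(x_\lambda)$; the convex combination $(1-\lambda)\cdot{}+\lambda\cdot{}$ then cancels the derivative term. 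Your secant-slope argument is more elementary in that it never touches $f'$, and as you note it shows the differentiability assumption is superfluous for this proposition. The paper's integral approach, on the other hand, dovetails with how the result is later consumed (differentiating $\KL(q_\nu\|p)$ under the integral sign), so the extra hypothesis costs nothing in context.
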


\begin{proof}
Since $f$ is differentiable, and since $x_0\!<\!x_\lambda\!<\!x_1$, we can write
\begin{align}
f(x_0) &= f(x_\lambda) - \int_{x_0}^{x_\lambda}\!\!f'(\xi)\,\text{d}\xi, 
\label{eq:f0} \\
f(x_1) &= f(x_\lambda) + \int_{x_\lambda}^{x_1}\!\! f'(\xi)\,\text{d}\xi.
\label{eq:f1}
\end{align}
Since $f$ is convex on $\mathbb{R}$, its derivative is everywhere non-decreasing; also, since $f$ is strictly convex in some interval containing~$x_\lambda$, its derivative is strictly increasing on this interval. From these observations and the form of eqs.~(\ref{eq:f0}--\ref{eq:f1}), we deduce that
\begin{align}
f(x_0) &> f(x_\lambda) - (x_\lambda\!-\!x_0) f'(x_\lambda), \\
f(x_1) &> f(x_\lambda) + (x_1\!-\!x_\lambda)  f'(x_\lambda).
\end{align}
Next we use eq.~(\ref{eq:x-eta}) to eliminate $x_\lambda$ from the differences in the above expressions. In this way we find
\begin{align}
f(x_0) &> f(x_\lambda) - \lambda (x_1\!-\!x_0) f'(x_\lambda), \\
f(x_1) &> f(x_\lambda) + (1\!-\!\lambda)(x_1\!-\!x_0)  f'(x_\lambda).
\end{align}
We obtain the desired result by taking the particular convex combination of these inequalities that cancels out their rightmost terms:
\begin{equation}
(1\!-\!\lambda)f(x_0) + \lambda f(x_1) > f(x_\lambda).
\end{equation}
This proves the result.
\end{proof}

The next proposition builds on the previous one to establish strict convexity for certain functions over $\mathbb{R}^d$.

\begin{proposition}
\label{prop:scRd}
Let $q$ be a density with positive support on all of $\mathbb{R}^d$. Let $g,h$ be functions on $\mathbb{R}^d$ related~by
\begin{equation}
h(\nu) = \int\!\! g(\nu\!+\!\zeta)\, q(\zeta)\, \text{d}\zeta.
\label{eq:gq}
\end{equation}
If $g$ is differentiable, convex on $\mathbb{R}^d$, and strictly convex on some open set of $\mathbb{R}^d$, then $h$ is strictly convex on~$\mathbb{R}^d$.
\end{proposition}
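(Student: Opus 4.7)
The plan is to prove strict convexity of $h$ by combining two ingredients: the pointwise convexity of $g$ (which yields a non-strict inequality after integration) and the strict convexity of $g$ on an open set (which upgrades this to a strict inequality once we integrate against a positive density). Fix arbitrary $\nu_0\!\neq\!\nu_1$ in $\mathbb{R}^d$ and $\lambda\!\in\!(0,1)$, and set $\nu_\lambda=(1\!-\!\lambda)\nu_0+\lambda\nu_1$. Using eq.~(\ref{eq:gq}), I would first rewrite the convexity defect as
\begin{equation}
(1\!-\!\lambda) h(\nu_0) + \lambda h(\nu_1) - h(\nu_\lambda) = \int \Delta(\zeta)\, q(\zeta)\, \text{d}\zeta,
\end{equation}
where $\Delta(\zeta) \equiv (1\!-\!\lambda) g(\nu_0+\zeta) + \lambda g(\nu_1+\zeta) - g(\nu_\lambda+\zeta)$ is non-negative everywhere by the convexity of $g$ on $\mathbb{R}^d$.

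The second step is to exhibit a non-empty open set on which $\Delta(\zeta)>0$. Let $U\subseteq\mathbb{R}^d$ be an open set on which $g$ is strictly convex, and let $\zeta$ be any point of the (non-empty, open) shifted set $U - \nu_\lambda$, so that $\nu_\lambda + \zeta \in U$. Define the one-variable slice $\phi(t) = g\bigl((1\!-\!t)(\nu_0+\zeta) + t(\nu_1+\zeta)\bigr)$. Then $\phi$ is differentiable and convex on $\mathbb{R}$, and, because $U$ is open around $\nu_\lambda + \zeta$, the strict convexity of $g$ on $U$ forces $\phi$ to be strictly convex in some neighborhood of $t = \lambda$. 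Proposition~\ref{prop:scR1} applied to $\phi$ with $x_0 = 0$ and $x_1 = 1$ then yields $\phi(\lambda) < (1\!-\!\lambda)\phi(0) + \lambda\phi(1)$, which is precisely $\Delta(\zeta) > 0$.

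To conclude, I would combine these observations: the integrand $\Delta(\zeta) q(\zeta)$ is non-negative on all of $\mathbb{R}^d$, strictly positive on the non-empty open set $U - \nu_\lambda$, and $q$ is strictly positive by hypothesis. Hence the right-hand side of the displayed equation is strictly positive, and strict convexity of $h$ follows.

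The main subtlety, to be handled with care, is the passage from ``$g$ strictly convex on an open set of $\mathbb{R}^d$'' to ``$\phi$ strictly convex in a neighborhood of $\lambda$.'' Openness of $U$ ensures that a short line segment through $\nu_\lambda + \zeta$ in any direction remains in $U$, which is exactly what is needed to transfer strict convexity along the one-dimensional slice. Everything else is a routine application of the fact that the integral of a non-negative function that is strictly positive on a set of positive measure (here any non-empty open set against the positive density $q$) is itself strictly positive.
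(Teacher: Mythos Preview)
Your proposal is correct and follows essentially the same approach as the paper: both arguments shift the open set of strict convexity by $-\nu_\lambda$, reduce to the one-dimensional Proposition~\ref{prop:scR1} along the line through $\nu_0+\zeta$ and $\nu_1+\zeta$, and then combine the strict inequality on that shifted set with the non-strict convexity inequality elsewhere to conclude. Your ``convexity defect'' framing is a slightly cleaner packaging of the same idea as the paper's explicit split of the integral over $\Omega_\lambda$ and its complement.
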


\begin{proof}
Let $\nu_0,\nu_1\in\mathbb{R}^d$, and let $\lambda\in(0,1)$. Since $g$ is convex, it follows directly from eq.~(\ref{eq:gq}) that $h$ is also convex, satisfying
\begin{equation}
h((1\!-\!\lambda)\nu_0 + \lambda\nu_1) \leq 
  (1\!-\!\lambda)h(\nu_0) + \lambda h(\nu_1).
\label{eq:h-convex}
\end{equation}
To prove the theorem, we must show additionally that this inequality is \textit{strict}. 

\begin{figure}[t]
\centerline{\includegraphics[width=0.3\textwidth]{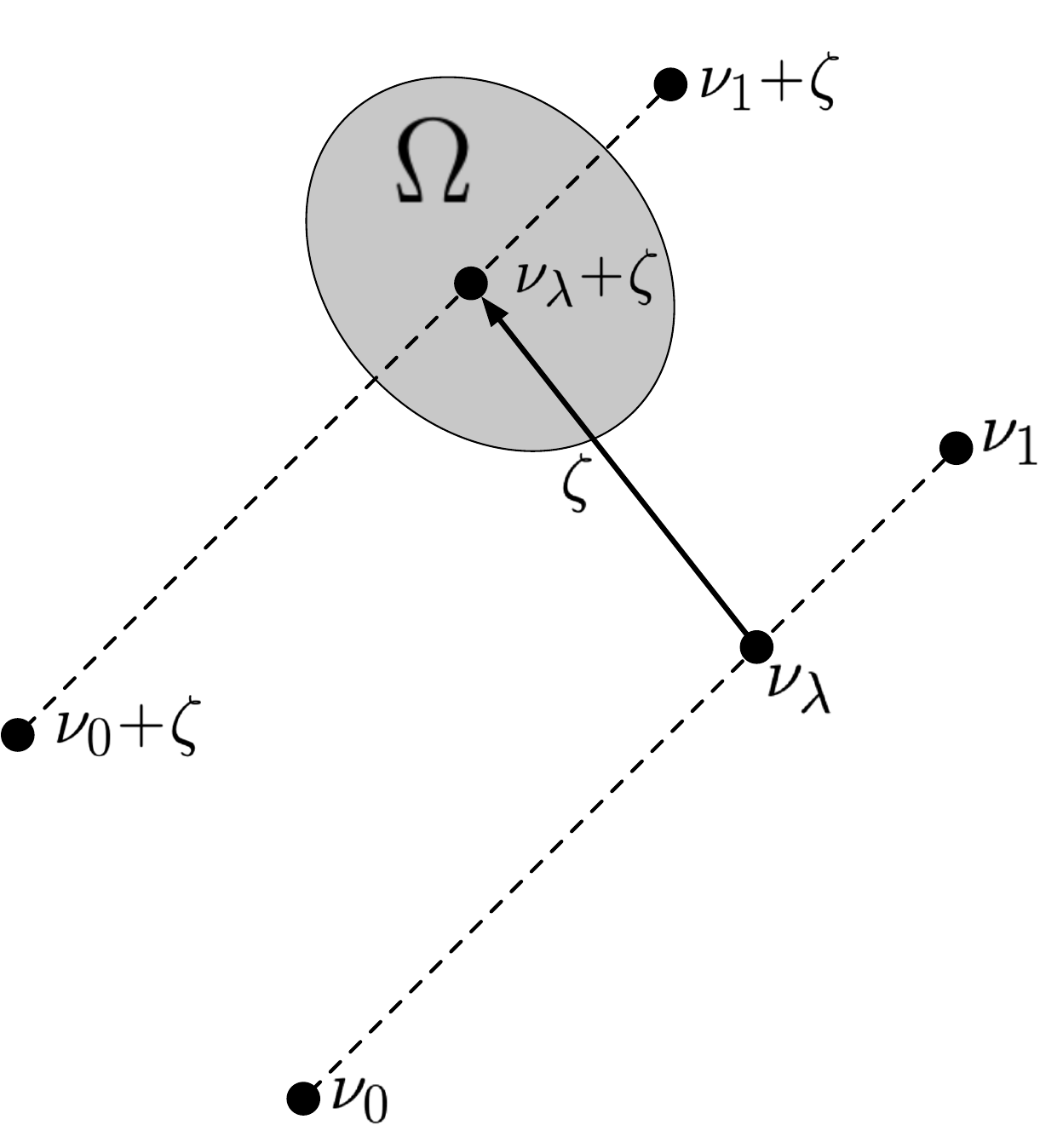}}
\caption{\textit{Illustration of the points $\nu_0$, $\nu_1$, $\nu_\lambda$, the translation $\zeta$, and the open set $\Omega$ in the proof of Proposition~\ref{prop:scRd}.}}
\label{fig:strictly-convex}
\end{figure}

Let $\Omega$ denote the open set of $\mathbb{R}^d$ in which $h$ is strictly convex, and as shorthand, denote the convex combination of $\nu_0$ and $\nu_1$ in eq.~(\ref{eq:h-convex}) by
\begin{equation}
\nu_\lambda = (1\!-\!\lambda)\nu_0 + \lambda\nu_1.
\label{eq:nu-lambda}
\end{equation}
We can define another open set $\Omega_\lambda$ in $\mathbb{R}^d$ by translating the points of $\Omega$ by an amount $-\nu_\lambda$; namely,
\begin{equation}
\Omega_\lambda = \left\{\zeta \in \mathbb R^d\, \big|\, \nu_\lambda\! +\! \zeta \in\Omega\right\}.
\end{equation}
Then from eq.~(\ref{eq:gq}), we can express $h(\nu)$ as the sum of two integrals, one over the set $\Omega_\lambda$, and one over its complement:
\begin{equation}
h(\nu) = \int_{\Omega_\lambda} \!\! g(\nu\!+\!\zeta)\, q(\zeta)\, \text{d}\zeta + \int_{\overline{\Omega}_\lambda} \!\! g(\nu\!+\!\zeta)\, q(\zeta)\, \text{d}\zeta
\label{eq:gq2}
\end{equation}
In what follows we will be especially focused on the first of these integrals.

Let $\zeta\!\in\!\Omega_\lambda$, and consider the illustration of $\nu_0,\nu_1,\nu_\lambda,\zeta$ and $\Omega$ in Fig.~\ref{fig:strictly-convex}. Define the function $f:\mathbb{R}\rightarrow\mathbb{R}$ by
\begin{equation}
f(x) = g((1\!-\!x)\nu_0 + \nu_1 + \zeta).
\label{eq:fg}
\end{equation}
Note that $f$ is convex (since $g$ is convex), and also that by construction, $f$ is strictly convex in some neighborhood of $x\!=\!\lambda$, since in this neighborhood the argument of $g((1\!-\!x)\nu_0 + \nu_1 + \zeta)$ in eq.~(\ref{eq:fg}) belongs to the set~$\Omega_\lambda$. It therefore follows from Proposition~\ref{prop:scR1} that
\begin{equation}
f(\lambda) < (1\!-\!\lambda)f(0) + \lambda f(1),
\end{equation}
or equivalently that
\begin{equation}
g(\nu_\lambda\!+\!\zeta) < (1\!-\!\lambda)g(\nu_0) + \lambda g(\nu_1),
\end{equation}
where the above inequality is strict for all $\zeta\!\in\!\Omega_\lambda$. 

Now we use this inequality to compute a strict lower bound on $h(\nu_\lambda)$. For the first integral in 
eq.~(\ref{eq:gq2}), over the set $\Omega_\lambda$, we have the strict lower bound
\begin{equation}
\int_{\Omega_\lambda} \!\! g(\nu_\lambda\!+\!\zeta)\, q(\zeta)\, \text{d}\zeta 
  < \int_{{\Omega}_\lambda} \!\!\big[(1\!-\!\lambda)g(\nu_0) + \lambda g(\nu_1)\big]\, q(\zeta)\, \text{d}\zeta,
\end{equation}
and for the second integral, over the complement of this set, we have the lower bound
\begin{equation}
\int_{\overline{\Omega}_\lambda} \!\! g(\nu_\lambda\!+\!\zeta)\, q(\zeta)\, \text{d}\zeta 
  \leq \int_{\overline{\Omega}_\lambda} \!\!\big[(1\!-\!\lambda)g(\nu_0) + \lambda g(\nu_1)\big]\, q(\zeta)\, \text{d}\zeta.
\end{equation}
Finally, summing the previous two equations, we obtain the result that
\begin{equation}
h(\nu_\lambda) < 
  (1\!-\!\lambda)h(\nu_0) + \lambda h(\nu_1),
\label{eq:h-strict-convex}
\end{equation}
which is the desired strengthening of eq.~(\ref{eq:h-convex}). The above holds 
for any $\lambda\!\in\!(0,1)$, $\nu_0,\nu_1\!\in\!\mathbb{R}^d$, and~$\nu_\lambda$ given by eq.~(\ref{eq:nu-lambda}); thus $h$ is strictly convex.
\end{proof}

We can now prove the main lemma needed for the proofs of Theorems~\ref{thm:location} and~\ref{thm:scale}.

\begin{lemma}
\label{lemma:KL}
Let $\mathcal{Q}$ be a location family, and for \mbox{$q_\nu\!\in\!\mathcal{Q}$} and some density $p$, consider the divergence $\KL(q_\nu||p)$ as a function of the location parameter $\nu\!\in\!\mathbb{R}^d$. If $\log p$ is concave on all of $\mathbb{R}^d$ and strictly concave on some open set of $\mathbb{R}^d$, then $\KL(q_\nu||p)$ is a strictly convex function of the location parameter, $\nu$.
\end{lemma}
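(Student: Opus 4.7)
The plan is to reduce the lemma directly to Proposition~\ref{prop:scRd}, which has already done all the heavy lifting. Using the identity derived in eqs.~(\ref{eq:KL-nu})--(\ref{eq:KL-convex-nu}), we can write
\begin{equation}
\KL(q_\nu||p) = -\mathcal{H}(q_0) - \int\! \log p(\nu+\zeta)\, q_0(\zeta)\, \text{d}\zeta,
\end{equation}
where the first term is a constant independent of $\nu$. Hence strict convexity of $\KL(q_\nu||p)$ in $\nu$ is equivalent to strict convexity of the function
\begin{equation}
h(\nu) = \int\! g(\nu+\zeta)\, q_0(\zeta)\, \text{d}\zeta, \qquad g := -\log p.
\end{equation}

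Next I would verify that $g$ and $q_0$ satisfy the hypotheses of Proposition~\ref{prop:scRd}. The regularity conditions on $p$ in section~\ref{sec:regularity} state that $p$ is differentiable and strictly positive on $\mathbb{R}^d$, so $g = -\log p$ is differentiable on $\mathbb{R}^d$. By hypothesis, $\log p$ is concave on $\mathbb{R}^d$ and strictly concave on some open set of $\mathbb{R}^d$, which means $g$ is convex on $\mathbb{R}^d$ and strictly convex on some open set of $\mathbb{R}^d$. Finally, $q_0$ is a density with positive support on all of $\mathbb{R}^d$, as assumed at the start of the relevant sections of the paper.

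With these hypotheses met, Proposition~\ref{prop:scRd} yields that $h$ is strictly convex on $\mathbb{R}^d$, and therefore so is $\KL(q_\nu||p)$ as a function of $\nu$. There is essentially no obstacle here: all the nontrivial work — namely, showing that convolving a merely-locally-strictly-convex function against a fully-supported density produces a globally strictly convex function — has already been absorbed into Propositions~\ref{prop:scR1} and~\ref{prop:scRd}. The only thing to watch for is to confirm explicitly the full-support assumption on $q_0$, since that is what guarantees the set $\Omega_\lambda$ in the proof of Proposition~\ref{prop:scRd} receives positive $q_0$-mass, and hence that the strict inequality survives integration.
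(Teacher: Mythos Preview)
Your proposal is correct and mirrors the paper's own proof almost exactly: the paper also rewrites $\KL(q_\nu\|p)$ via eq.~(\ref{eq:KL-convex-nu}), observes that $\mathcal{H}(q_0)$ is independent of $\nu$, and then invokes Proposition~\ref{prop:scRd} together with the concavity assumptions on $\log p$. Your version is slightly more explicit in checking the hypotheses (differentiability of $g$, full support of $q_0$), but the argument is the same.
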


\begin{proof}
From eq.~(\ref{eq:KL-convex-nu}), the KL divergence is given by
\begin{equation}
\KL(q_\nu||p) 
 = -\mathcal{H}(q_0)\, - \int\! \log p(\nu\!+\!\zeta)\, q_0(\zeta)\,\text{d}\zeta.
\end{equation}
Note that the entropy in this expression, $\mathcal{H}(q_0)$, has no dependence on the location parameter. Thus the lemma follows directly from the previous proposition and the concavity assumptions on $\log p$.
\end{proof}

\section{EXPERIMENTAL DETAILS}

In this appendix, we provide additional details for all  experiments in the paper.

\subsection{Algorithms and evaluation metric} \label{app:algorithms}

For 1-dimensional examples, we minimize $\KL(q||p)$ using a grid-search.

For all other examples, we use the statistical software \texttt{Stan} \citep{Carpenter:2017}, which supports gradient-based algorithms, and the add-on \texttt{bridgeStan} \citep{Roualdes:2023}, which enables convenient manipulations of \texttt{Stan}'s output.
\texttt{Stan} transforms all constrained latent variables to the unconstrained scale and so all algorithms, including VI and MCMC, operate over $\mathbb R^d$.
We report the quality of the approximation over the unconstrained space.
By working in this unconstrained space, we ensure that 
the experiments align with our theoretical analysis, which assumes that $p$ and $q$ are defined over $\mathbb R^d$.

All experiments are conducted with a 2.8 GHz Quad-Core Intel Core i7 CPU processor.
We use the command line interface of \texttt{Stan}, specifically \texttt{cmdStan} v2.34.1.
As a scripting language (i.e. for data manipulation and figures), we use \texttt{R} v4.2.3.

\subsubsection{Variational inference}

In our experiments we use automatic differentiation~VI \citep[ADVI;][]{Kucukelbir:2017} where $\mathcal Q$ is chosen to be a family of Gaussian distributions.
In Section~\ref{sec:location-illustration} we use the ``mean-field'' mode, meaning the covariance matrix of $q$ is diagonal;
for all other experiments, we use the ``fullrank'' mode and estimate the full covariance matrix.
ADVI optimizes the ELBO using stochastic gradient descent, as described in Section~\ref{sec:VI}.
In order to stabilize the algorithm, we set the default number of Monte Carlo draws to 1,000 per optimization step.
For certain problems, we depart from this default, either to stabilize the solution or, for larger problems, to reduce compute time.
In particular, for \texttt{disease\_map} and \texttt{SKIM} we reduce the number of Monte Carlo samples per optimization step to 50.

\subsubsection{Markov chain Monte Carlo benchmark}

As a benchmark, we run long chains of MCMC.
Specifically, we use \texttt{Stan}'s default sampler, which is a dynamic Hamiltonian Monte Carlo algorithm \citep{Betancourt:2018, Hoffman:2014}.
We use 1,000 warmup iterations and 20,000 sampling iterations.

\subsubsection{Evaluation metrics}

To assess the quality of VI's approximation, we examine 
the posterior mean, the posterior correlation matrix, and the posterior covariance matrix.
For the mean, we compute
\begin{equation}  \label{eq:delta_mean}
    \Delta_\text{mean} = \frac{|\mathbb E_p(z) - \mathbb E_q(z)|}{\text{max}(\sqrt{\text{Var}_p(z)}, |\mathbb E_p(z)|)},
\end{equation}
where the scaling factor is chosen to make the error scale-free and numerically stable.
Note that we do not simply compute the relative absolute error, with $\mathbb{E}_p(z)$ in the denominator: though a natural choice, 
it is unstable when $\mathbb E_p(z) \approx 0$.
We also do not simply divide by the posterior standard deviation, as this quantity decays to 0 in cases where the data is rich (e.g., \texttt{glm}), and this in turn causes any errors in the numerator to be severely penalized. 
As a result, the performance of VI---when measured relative to the posterior standard deviation---decreases with the number of observations and this effect must be counterbalanced by fine-tuning the optimizer.
The metric in eq.~\eqref{eq:delta_mean} was used to safeguard our analysis against these sources of numerical instability. 

We report the errors in the correlation and covariance in slightly different ways.
For the correlation, we simply report the absolute error
\begin{equation}
    \Delta_\text{corr} = |\text{Corr}_p(z_i, z_j) - \text{Corr}_q(z_i, z_j)|,
\end{equation}
noting that the correlation is scale-free.
On the other hand, for the covariance, we report the absolute relative error
\begin{equation}
    \Delta_\text{cov} = \frac{|\text{Cov}_{p(z)} - \text{Cov}_{q(z)}|}{|\text{Cov}_{p(z)}|}.
\end{equation}
This metric suffers from the usual problem that for covariances close to 0, the error is numerically unstable.
We considered adjusting the denominator, as was done for $\Delta_\text{mean}$, however we do not have good estimates of the variance's variance (i.e. the fourth moment), even with long runs of MCMC.
We also did not find that the reported covariance errors were unstable.

\subsection{Additional results for Bayesian logistic regression} \label{app:logistic}

\begin{figure*}
    \centering
    \includegraphics[width=1\linewidth]{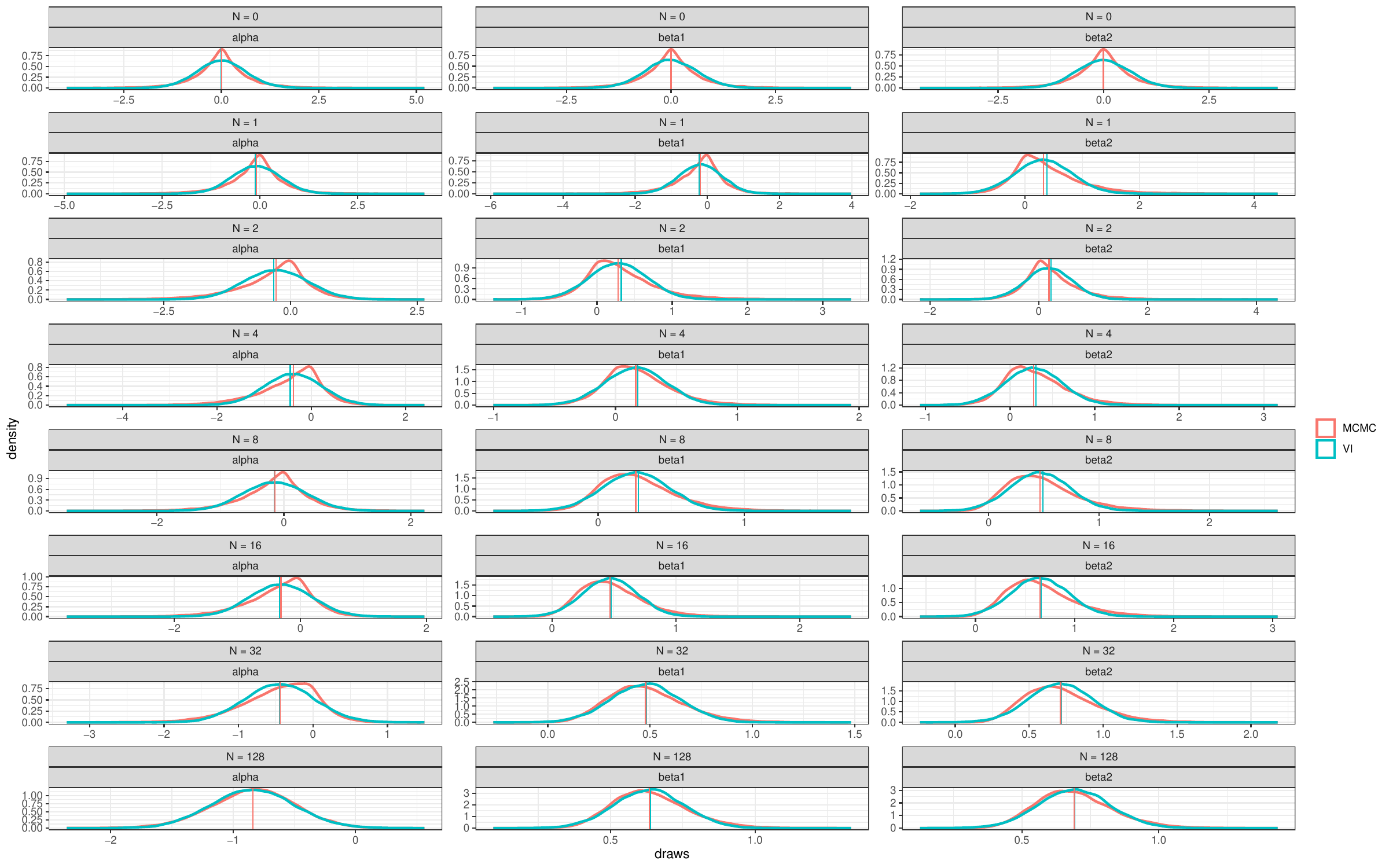}
    \caption{\textit{Posterior distributions for $\alpha$, $\beta_1$, and $\beta_2$ for a Bayesian logistic regression with $N$ observations.
    Vertical lines indicate the means estimated by MCMC and VI.
    Overall, we find that these estimates match when the posterior is symmetric ($N = 0, N = 128$).
    In the presence of assymmetry, VI can still correctly estimate the marginal mean for some components, though not all of them.
    }}
    \label{fig:logistic_density_all}
\end{figure*}

Figure~\ref{fig:logistic_density_all} plots the marginal posterior densities for the Bayesian logistic regression introduced in section~\ref{sec:posterior-symmetry} with $N\! =\! 0, 1, 2, 4, 8, 16, 32, 128$ observations.
(For clarity, the main body only shows the marginal posteriors for $\beta_1$ with $N\! =\! 0, 4, 128$.
From the figure, we see how the asymmetry manifests for all regression parameters as the number of examples is increased from $N\!=\!0$ (where the prior dominates) to $N\!=\!128$) (where the likelihood dominates).
When the posterior is asymmetric, the posterior mean can still be accurately estimated (e.g., $N\!=\! 32$), and in many cases we see that one component (but not all three) of the posterior mean is recovered.

\subsection{Targets in numerical experiments} \label{app:targets}

In this appendix, we provide details on the targets used in section~\ref{sec:experiments}.

\texttt{student-t} ($d=2$). A multivariate student-t distribution with correlation 0.5 between $z_1$ and $z_2$.

\texttt{disease\_map} ($d=102$). A disease map of Finland to model mortality counts across counties \citep{Vanhatalo:2019}. 
The model uses a Gaussian process model with an exponentiated squared kernel,
  \begin{equation}
     k(x_i, x_j) = \alpha^2 \exp \left (- \frac{(x_i - x_j)^T(x_i - x_j)}{\rho^2} \right),
  \end{equation}
where $x_i$ is the two-dimensional coordinate of the $i^\text{th}$ county.
The covariance matrix $K$ of the Gaussian process is then given by $K_{ij} = k(x_i, x_j)$.
The full model is
\begin{eqnarray}
    \rho & \sim & \mathrm{invGamma}(2.42, 14.81), \nonumber \\
    \alpha & \sim & \mathrm{invGamma}(10, 10), \nonumber \\
    \epsilon & \sim & \mathcal N(0, I_{n \times n}), \nonumber \\
    L & = & \mathrm{Cholesky \ decompose}(K), \nonumber \\
    \theta & = & L \epsilon, \nonumber \\
    y_i & \sim & \mathrm{Poisson}(y^i_e e^{\theta_i}).
\end{eqnarray}

\texttt{GLM} ($d=3$). Binomial general linear model for modeling the success rate of Peregrine broods in the French Jura.
This model is part of the model data base \texttt{PosteriorDB} \citep{Magnusson:2024}
The observations are:
\begin{itemize}
    \item $N$, the number of surveyed Peregrine falcons,
    \item $C$, the number of Peregrine falcons brooding,
    \item $ye$, year covariate,
\end{itemize}
and the full model is
\begin{eqnarray}
    \alpha & \sim & \mathcal N(0, 100^2) \nonumber \\
    \beta_1 & \sim & \mathcal N(0, 100^2) \nonumber \\
    \beta_2 & \sim & \mathcal N(0, 100^2) \nonumber \\
    C & \sim & \mathcal B \left (N, \text{logit}^{-1} (\alpha + \beta_1 ye + \beta_2 ye^2) \right),
\end{eqnarray}
where $\mathcal B(N, p)$ denotes the binomial distribution with total number of trials $N$ and probability of success $p$.

\texttt{8schools} and \texttt{8schools\_nc} ($d=10$). A Bayesian hierarchical model of the effects of a test preparation program across $N = 8$ schools \citep{Rubin:1981, Gelman:2013}.
For each school we observe $y_i$, the average change in test scores, and $\sigma_i$, the sample standard deviation across students.
The natural (centered) parameterization (\texttt{8schools}) is
\begin{eqnarray}
    \mu & \sim & \mathcal N(5, 3^2) \nonumber \\
    \tau & \sim & \mathcal N^+(0, 5^2) \nonumber \\
    \theta_i & \sim & \mathcal N(\mu, \tau^2) \nonumber \\
    y_i & \sim & \mathcal N(\theta_i, \sigma_i^2),
\end{eqnarray}
where $\mathcal N^+$ denotes a Gaussian distribution truncated at 0.
The latent variables are $z = (\mu, \log \tau, \theta_{1:N})$.
This model is known to exhibit an unbounded posterior density with a funnel shape \citep{Neal:2001}.
The non-centered parameterization (\texttt{8schools\_nc}) alleviates this geometric pathology:
\begin{eqnarray}
    \mu & \sim & \mathcal N(5, 3^2) \nonumber \\
    \tau & \sim & \mathcal N^+(0, 5^2) \nonumber \\
    \epsilon_i & \sim & \mathcal N(0, 1) \nonumber \\
    \theta_i & = & \mu + \tau \epsilon_i \nonumber \\
    y_i & \sim & \mathcal N(\theta_i, \sigma_i^2).
\end{eqnarray}
This data generative process is the same as above, however in this parameterization, the latent variables are $z = (\mu, \log \tau, \epsilon_{1:N})$.

\texttt{mixture} ($d=2$). A balanced mixture of two two-dimensional Gaussians with different scales and no correlation between components.
\begin{equation}
    z_1, z_2 \overset{\text{iid}}{\sim} \frac{1}{2} \mathcal N(-1, 2) + \frac{1}{2} \mathcal N(3, 1).
\end{equation}
This distribution is neither elliptically nor even-symmetric.

\texttt{SKIM} ($d=305$).  The sparse kernel interaction model (SKIM), developed by \citet{Agrawal:2019}, is a regularized regression model which accounts for interaction effects between covariates.
The model makes a soft selection of regression coefficients using a regularized horseshoe prior \citep{Piironen:2017}.
Following \citet{Margossian:2020}, we apply the model to a  genetic microarray classification data set on prostate cancer.
The data set is made up of $N = 102$ patients and $p = 200$ pre-selected genetic covariates.
We denote $y \in \{0, 1\}^N$ the binary observations (1: the patient has cancer, 0: the patient has no cancer), and $X \in \mathbb R^{N \times p}$ the design matrix.

To specify the full data generating process, we first set the following hyperparameters:
\begin{eqnarray}
    p_0 & = & 5 \nonumber \\
    s_\mathrm{global} & = & \frac{p_0}{\sqrt{N}(p - p_0)} \nonumber \\
    \nu_\mathrm{local} & = & 1 \nonumber \\
    \nu_\mathrm{global} & = & 1 \nonumber \\
    s_\mathrm{slab} & = & 2 \nonumber \\
    s_\mathrm{df} & = & 100 \nonumber \\
    c_0 & = & 5.
\end{eqnarray}

Then,
\begin{eqnarray}
  \lambda_i & \sim & \mathrm{Student}_t(\nu_\mathrm{local}, 0, 1) \nonumber \\
     \tau & \sim &  \mathrm{Student}_t(\nu_\mathrm{global}, 0, s_\mathrm{global}) \nonumber  \\
     c_\mathrm{aux} & \sim & \mathrm{inv}\Gamma(s_\mathrm{df} / 2, s_\mathrm{df} / 2) \nonumber \\
  \chi &\sim & \text{InverseGamma}(s_\mathrm{df} / 2, s_\mathrm{df} / 2) \nonumber \\
  c & = & \sqrt{c_\text{aux}} s_\text{slab} \nonumber \\
  \tilde \lambda_i^2 & = &  \frac{c^2 \lambda_i^2}{c^2 + \tau^2 \lambda_i^2} \nonumber \\
 \eta_2 & = & \tau^2 \chi / c^2 \nonumber \\
 \beta_0 & \sim & \mathcal N(0, c_0^2) \nonumber \\
\beta_{i} & \sim & \mathcal N(0, \tau^2 \tilde{\lambda}_i^2) \nonumber  \\
\beta_{ij} & \sim & \mathcal N(0, \eta_2^2 \tilde{\lambda}_i^2 \tilde{\lambda}_j^2) \nonumber \\
y & \sim & \text{Bernoulli}(\text{logit}^{-1}(\beta_0 + X \beta)).
\end{eqnarray}
Following \citet{Agrawal:2019}, we marginalize out $\beta_i$ and $\beta_{ij}$, using a kernel trick and a Gaussian process reparameterization.
To define the Gaussian process covariance matrix $K$, we first introduce the auxiliary matrices:
\begin{eqnarray}
    K_1 & = & X \ \mathrm{diag}(\tilde{\lambda}^2) \ X^T \nonumber \\
    K_2 & = & [X \circ X] \ \mathrm{diag}(\tilde{\lambda}^2) \ [X \circ X]^T,
  \end{eqnarray} 
   where ``$\circ$'' denotes the element-wise Hadamard product.
   Finally,
   \begin{eqnarray}
    K & = & \frac{1}{2} \eta_2^2 (K_1 + 1) \circ (K_1 + 1) - \frac{1}{2} \eta_2^2 K_2
    - (\tau^2 - \eta_2^2) K_1 \nonumber \\ 
    & & + c_0^2  - \frac{1}{2} \eta_2^2.
\end{eqnarray}
and
\begin{eqnarray}
    \epsilon & \sim & \mathcal N(0, 1) \nonumber \\
    L & = & \text{Cholesky\_decompose}(K) \nonumber \\
    f & = & L \epsilon \nonumber \\
    y & \sim & \text{Bernoulli}(\text{logit}^{-1}(f)). 
\end{eqnarray}

\texttt{Crescent} ($d=2$). See Eq.~\eqref{eq:crescent}.

\end{document}